\renewcommand{\thefootnote}{\fnsymbol{footnote}}
\newenvironment{customthm}[1]
  {\innercustomthm}
  {\endinnercustomthm}
\newenvironment{customcorollary}[1]
  {\innercustomcorollary}
  {\endinnercustomcorollary}
\algnewcommand\INPUT{\item[\textbf{Input:}]}%
\algnewcommand\OUTPUT{\item[\textbf{Output:}]}%
\newtheorem{theorem}{Theorem}[section]
\newtheorem{corollary}{Corollary}[section]
\newtheorem{lemma}{Lemma}
\newcommand{\fattau}{\pmb{\tau}}
\theoremstyle{definition}
\newtheorem{definition}{Definition}
\title{Quantum Tensor Networks, Stochastic Processes, and Weighted Automata}
\author{%
  Siddarth Srinivasan\thanks{denotes equal contribution} \\
  School of Computer Science and Engineering\\
  University of Washington \\
  \texttt{sidsrini@cs.washington.edu} \\
  \And
  Sandesh Adhikary$^*$ \\
  School of Computer Science and Engineering\\
  University of Washington \\
  \texttt{adhikary@cs.washington.edu} \\
  \And
  Jacob Miller \\
 Mila \& DIRO \\ Universit\'{e} de Montr\'{e}al \\
  \texttt{jmjacobmiller@gmail.com} \\
  \And
 Guillaume Rabusseau\\
  CCAI chair - Mila \& DIRO\\ Université de Montréal\\
  \texttt{grabus@iro.umontreal.ca} \\
  \And
  Byron Boots \\
  School of Computer Science and Engineering \\
  University of Washington \\
  \texttt{bboots@cs.washington.edu}
}
\begin{document}
 
\maketitle
\begin{abstract}
Modeling joint probability distributions over sequences has been studied from many perspectives. The physics community developed \emph{matrix product states}, a tensor-train decomposition for probabilistic modeling, motivated by the need to tractably model many-body systems. But similar models have also been studied in the stochastic processes and weighted automata literature, with little work on how these bodies of work relate to each other. We address this gap by showing how stationary or uniform versions of popular quantum tensor network models have equivalent representations in the stochastic processes and weighted automata literature, in the limit of infinitely long sequences. We demonstrate several equivalence results between models used in these three communities: (i) uniform variants of matrix product states, Born machines and locally purified states from the quantum tensor networks literature, (ii) predictive state representations, hidden Markov models, norm-observable operator models and hidden quantum Markov models from the stochastic process literature, and (iii) stochastic weighted automata, probabilistic automata and quadratic automata from the formal languages literature.
Such connections may open the door for results and methods developed in one area to be applied in another.
\end{abstract}
\renewcommand{\thefootnote}{\arabic{footnote}}
\section{Introduction}

Matrix product states (MPS) were first developed by the physics community as compact representations of often intractable wave functions of complex quantum systems~\citep{perez2006matrix,Klumper1993MatrixPG,fannes1992finitely}, in parallel with the equivalent tensor-train decomposition~\citep{Oseledets2011TensorTrainD} developed in applied mathematics for high-order tensors. These tensor network models have been gaining popularity in machine learning especially as means of compressing highly-parameterized models \citep{novikov2015tensorizing,garipov2016ultimate,Yu2017LongtermFU,novikov2014putting}. There has also been recent interest in directly connecting ideas and methods from quantum tensor networks to machine learning \citep{stoudenmire2016supervised,han2018,guo2018matrix,huggins2019towards}. In particular, tensor networks have been used for probabilistic modeling as parameterizations of joint probability tensors \citep{glasser2019expressive,miller2020tensor,stokes2019probabilistic}. But the same problem has also been studied from various other perspectives. Notably, observable operator models \citep{jaeger2000observable} or predictive state representations (PSRs) \citep{SinghPSR} from the machine learning literature and stochastic weighted automata \citep{balle2014methods} are approaches to tackle essentially the same problem. While \citet{Thon2015} provide an overview discussing connections between PSRs and stochastic WA, their connection to MPS has not been extensively explored. At the same time, there exist many variants of tensor network models related to MPS that can be used for probabilistic modeling. \citet{glasser2019expressive} recently provided a thorough investigation of the relative expressiveness of various tensor networks for the \emph{non-uniform} case (where cores in the tensor decomposition need not be identical). However, to the best of our knowledge, similar relationships have  not yet been established for the \emph{uniform} case. We address these issues by examining how various quantum tensor networks relate to aforementioned work in different fields, and we derive a collection of results analyzing the relationships in expressiveness between uniform MPS and their various subclasses.  

The uniform case is important to examine for a number of reasons. The inherent weight sharing in uniform tensor networks leads to particularly compact models, especially when learning from highly structured data. This compactness becomes especially useful when we consider physical implementations of tensor network models in quantum circuits. For instance, \citet{glasser2019expressive} draw an equivalence between local quantum circuits and tensor networks; network parameters define gates that can be implemented on a quantum computer for probabilistic modeling. Uniform networks have fewer parameters, corresponding to a smaller set of quantum gates and greater ease of implementation on resource constrained near-term quantum computers. Despite the many useful properties of uniformity, the tensor-network literature tends to focus more on non-uniform models. We aim to fill this gap by developing expressiveness relationships for uniform variants.

We expect that the connections established in this paper will also open the door for results and methods in one area to be used in another. For instance, one of the proof strategies we adopt is to develop expressiveness relationships between subclasses of PSRs, and show how they also carry over to equivalent uniform tensor networks. Such cross fertilization also takes place at the level of algorithms. For instance, the learning algorithm for locally purified states (LPS) employed in \citet{glasser2019expressive} does not preserve uniformity of the model across time-steps, or enforce normalization constraints on learned operators. With the equivalence between uniform LPS and hidden quantum Markov models (HQMMs) established in this paper, the HQMM learning algorithm from \citet{adhikary2019expressiveness}, based on constrained optimization on the Stiefel manifold, can be adapted to learn uniform LPS \emph{while enforcing all appropriate constraints}. Similarly, spectral algorithms that have been developed for stochastic process models such as hidden Markov models (HMMs) and PSRs \citep{hsu2012spectral, siddiqi2010reduced, bailly2009grammatical} could also be adapted to learn uniform LPS and uniform MPS models. Spectral algorithms typically come with consistency guarantees, along with rigorous bounds on sample complexity. Such formal guarantees are less common in tensor network methods, such as variants of alternating least squares \citep{Oseledets2011TensorTrainD} or density matrix renormalization group methods \citep{dmrg}. On the other hand, tensor network algorithms tend to be better suited for very high-dimensional data; presenting an opportunity to adapt them to scale up algorithms for stochastic process models. 

Finally, one of our key motivations is to simply provide a means of translating between similar models developed in different fields. While prior works \citep{glasser2019expressive,kliesch2014matrix,critch2014algebraic} have noted similarities between tensor networks, stochastic processes and weighted automata, many formal and explicit connections are still lacking, especially in the context of model expressiveness. It is still difficult for practitioners in one field to verify that the model classes they have been working with are indeed used elsewhere, given the differences in nomenclature and domain of application; simply having a dictionary to rigorously translate between fields can be quite valuable. Such a dictionary is particularly timely given the growing popularity of tensor networks in machine learning. We hope that the connections developed in this paper will help bring together complementary advances occurring in these various fields.

\paragraph{Summary of Contributions}
In Section 2, we demonstrate that uniform Matrix states (uMPS) are equivalent to predictive state representations and stochastic weighted automata, when taken in the \emph{non-terminating limit} (where we evaluate probabilities sufficiently away from the end of a sequence). Section 3 presents the known equivalence between uMPS with non-negative parameters, HMMs, and probabilistic automata, to show in Section 4 that another subclass of uMPS called Born machines (BM) \citep{han2018} is equivalent to norm-observable operator models (NOOM) \citep{Zhao2010b} and quadratic weighted automata (QWA) \citep{bailly2011quadratic}. We also demonstrate that uBMs and NOOMs are relatively restrictive model classes in that there are HMMs with no equivalent finite-dimensional uBM or NOOM (HMMs $\nsubseteq$ NOOMs/uBMs). Finally, in Section 5, we analyze a broadly expressive subclass of uMPS known as locally purified states (LPS), demonstrate its equivalence to hidden quantum Markov models, and discuss the open question of how the expressiveness of uLPS relates to that of uMPS. We thus develop a unifying perspective on a wide range of models coming from disparate communities, providing a rigorous characterization of how they are related to one another, as illustrated in Figures ~\ref{fig:tensor_network_diagrams} and ~\ref{fig:model_subsets}. The proofs for all theorems are provided in the Appendix. In our presentation, we routinely point out connections between tensor networks and relevant concepts in physics. However, we note that these models are not restricted to this domain.

\paragraph{Notation}
We use bold-face for matrix and tensor operators (e.g. $\mathbf{A}$), arrows over symbols to denote vectors (e.g. $\vec{x}$), and plain non-bold symbols for scalars. The vector-arrows are also used to indicate vectorization (column-first convention) of matrices. We frequently make use of the ones matrix $\mathbf{1}$ (filled with $1$s) and the identity matrix $\mathds{I}$, as well as their vectorizations $\vec{\mathbf{1}}$ and $\vec{\mathds{I}}$. We use overhead bars to denote complex conjugates (e.g. $\bar{\mathbf{A}}$) and $\dagger$ for the conjugate transpose $(\bar{\mathbf{A}}^T = \mathbf{A}^\dagger)$. Finally, $\text{tr}(\cdot)$ denotes the trace operation applied to matrices, and $\otimes$ denotes the Kronecker product.

\section{Uniform Matrix Product States}

 Given a sequence of $N$ observations, where each outcome can take $d_i$ values, the joint probability of any particular sequence $y_1, \ldots, y_N$ can be written using the following tensor-train decomposition, which gives an MPS: 

\begin{align}
    P(y_1,\ldots,y_N)
    ~
    &\propto
    ~
    \text{MPS}_{y_1,\ldots,y_N} \nonumber\\
    &= \mathbf{A}^{[N],y_{N}} \mathbf{A}^{[N-1],y_{N-1}}
    ~
    \dots
    ~
    \mathbf{A}^{[2],y_{2}} \mathbf{A}^{[1],y_1}
    \label{eq_mps}
\end{align}

where each $\mathbf{A}^{[i]}$ is a three-mode tensor \emph{core} of the MPS containing $d_i$ slices, with the matrix slice associated with outcome $Y_i$ denoted by $\mathbf{A}^{[i], y_i}$. Each slice is a $D_{i+1} \times D_{i}$ matrix, and the conventional choice (which we use in this paper) of \emph{open boundary conditions}\footnote{An alternate choice, \emph{periodic boundary conditions}, sets $D_0=D_N$ and $D_0, D_N \geq 1$ and uses a trace operation to evaluate the product of matrices in Equation~\ref{eq_mps}. MPS with periodic boundaries are equivalent to the tensor ring decomposition~\citep{Mickelin2018TensorRD}.} is to set $D_0=D_N=1$ (i.e. $\mathbf{A}^{[1],y_1}$ and $\mathbf{A}^{[N],y_N}$ are column and row vectors respectively). MPS with open boundaries are equivalent to tensor train (TT) decompositions, and we will define them over the complex field, a choice common in quantum physics and tensor network settings.

The maximal value of $D = \max_k D_k$ is also called the bond-dimension or the TT-rank \citep{glasser2019expressive} of the MPS. 
\footnote{Operational characterizations of the bond dimension have been developed in quantum physics, in terms of entanglement~\citep{eisert2010} or the state space dimension of recurrent many-body dynamics which generate the associated wavefunction~\citep{Schoen2005SequentialGO}.} For fixed dynamics, this will lead the MPS cores $\mathbf{A}^{[i]}$ to be identical.

In this paper, we will focus on the ``uniform'' case of identical cores, i.e., a uniform MPS or uMPS. uMPS models were first developed in the quantum physics community~\citep{perez2006matrix,2016TangentSM}, although employing a different probabilistic correspondence (Born machines as discussed later) than described below. As we will discuss, this corresponds naturally to Markovian dynamical systems; the notion of \emph{past being independent of future given the present} is encoded by the tensor train structure where each core only has two neighbours. While an MPS is inherently defined with respect to a fixed sequence length, a uMPS can be applied to sequences of arbitrary fixed or infinite length~\citep{Cirac2010InfiniteMP}. As there should be no distinction between the cores at different time steps in a uMPS, a natural representation is to fix two boundary vectors $(\vec{\sigma},~\vec{\rho}_0)$, leading to the following decomposition of the joint probability:
\begin{align} 
    P(y_1,\dots,y_N) &= \text{uMPS}_{y_1,\dots,y_N}
    \nonumber \\
    &= \vec{\sigma}^\dagger \mathbf{A}^{y_N} \mathbf{A}^{y_{N-1}}
    \dots
    \mathbf{A}^{y_2} \mathbf{A}^{y_1}
    \vec{\rho}_0
    \label{eq_umps}
\end{align}

\begin{figure}
    \centering
    \includegraphics[scale=0.37]{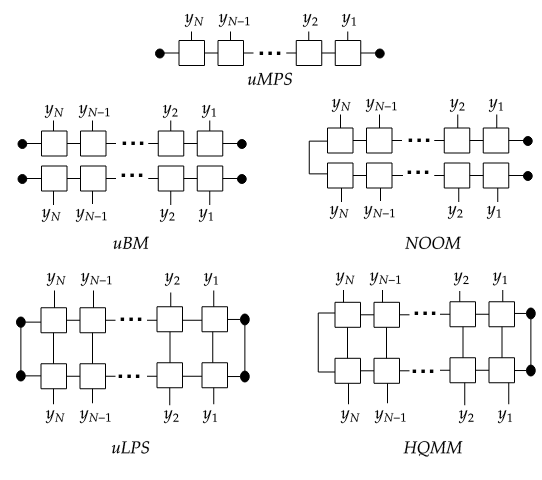}
    \caption{Tensor network diagrams}
    \label{fig:tensor_network_diagrams}
\end{figure}

To explore connections with arbitrary-length PSRs and WFAs, we will particularly focus on the \emph{non-terminating limit}. Consider that if we wished to compute the probability of some subsequence from $t= 1,\ldots,T$ of the $N$-length uMPS ($T < N$), we could compute $P(y_1, \ldots, y_T) = \sum_{y_N}\cdots\sum_{y_{T+1}}P(y_1,\ldots,y_T,y_{T+1},\ldots,y_N)$.  

The non-terminating limit is essentially when we consider the uMPS to be infinitely long, i.e., we compute the probability of the subsequence in the limit as $N \to \infty$.
\begin{definition}[Non-terminating uMPS]
A non-terminating uMPS is an infinitely long uMPS where we can compute the probability of any sequence $P(y_1,\ldots,y_T)$ of length $T$ by marginalizing over infinitely many future observations, i.e. $P(y_1,\ldots,y_T) = \lim_{N\to\infty}\sum_{y_N}\cdots\sum_{y_{T+1}}P(y_1,\ldots,y_T,y_{T+1},\ldots,y_N)$.
\end{definition}

This is a natural approach to modeling arbitrary length sequences with Markovian dynamics; intuitively, if given an identical set of tensor cores at each time step, the probability of a sequence should not depend on how far it is from the `end' of the sequence.  Similar notions are routinely used in machine learning and physics. In machine learning, it is common to discard the first few entries of sequences as ``burn-in'' to allow systems to reach their stationary distribution. In our case, the burn is being applied to the end of the sequence. The non-terminating limit is also similar to the ``thermodynamic limit'' employed in many-body physics, which marginalizes over an infinite number of future \emph{and} past observations \citep{2016TangentSM}. Such limits reflect the behavior seen in the interior of large systems, and avoid more complicated phenomena which arise near the beginning or end of sequences.

\subsection{The Many Names of Matrix Product States}
\label{sec_psr}
While connections between MPS and hidden Markov models (HMM) have been widely noted, we point out that non-terminating uMPS models have been studied from various perspectives, and are referred to by different names in the literature, such as stochastic weighted finite automata (stochastic WFA)~\citep{balle2014spectral}, quasi-realizations~\citep{vidyasagar2011complete}, observable operator models (OOM)~\citep{jaeger2000observable}, and (uncontrolled) predictive state representations (PSR)~\citep{SinghPSR}. The latter three models are exactly identical (we just refer to them as uncontrolled PSRs in this paper) and come from the stochastic processes perspective, while stochastic WFA are slightly different in their formulation, in that they are more similar to (terminating) uMPS (see below).  \citet{Thon2015} detail a general framework of \emph{sequential systems} to study how PSRs and WFA relate to each other. 

\begin{figure}
    \centering
    \includegraphics[scale=0.34]{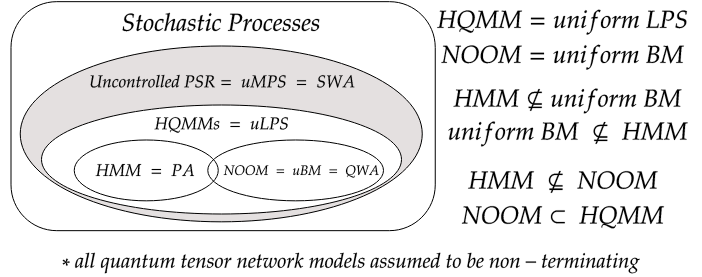}
    \caption{Subset relationships between stochastic process models, non-terminating uniform quantum tensor networks, and weighted automata, along with a summary of new relationships established in this paper. The grey area is potentially empty.}
    \label{fig:model_subsets}
\end{figure}

\paragraph{Predictive State Representations} We write the stochastic process defined by an $n$--dimensional predictive state representation over a set of discrete observations $\mathcal{O}$ as a tuple $(\mathds{C}^n, \vec{\sigma}, \{{\bf \pmb{\tau}}_y\}_{y\in\mathcal{O}}, \vec{x}_0)$. The initial state $\vec{x}_0 \in \mathds{C}^n$ is normalized, as enforced by the linear evaluation functional $\vec{\sigma}$, i.e., $\vec{\sigma}^\dagger \vec{x}_0 = 1$, and the observable operators are constrained to have normalized marginals over observations  $\vec{\sigma}^\dagger\sum_y \pmb{\tau}_y = \vec{\sigma}^\dagger$, i.e., $\vec{\sigma}^\dagger$ is a fixed point of the \emph{transfer operator} $\sum_y {\pmb{\tau}}_y$. The probability of arbitrary length sequences $y_1, \ldots, y_T \in \mathcal{O}^{T}$ is computed as $\vec{\sigma}^\dagger\pmb{\tau}_{y_T} \ldots \pmb{\tau}_{y_1}\vec{x}_0$, which should be non-negative for any sequence. Note that we simply require this to hold for a valid PSR; we do not explicitly enforce constraints to ensure this. This joint probability computation is identical to Equation~\ref{eq_umps}, where  evaluation functional $\vec{\sigma}$ and the initial state $\vec{x}_0$  are analogous to the left and right boundary vectors of the uMPS, and the \emph{observable operators} $\pmb{\tau}_y$ correspond to the matrix slices $\mathbf{A}^{y}$. In this sense, both uMPS and PSRs define tensor-train decompositions of joint distributions for a given fixed number of observations $T$. The only difference is that a uMPS does not require its evaluation functional to be the fixed point of its transfer operator. However, as we now discuss, any arbitrary uMPS evaluation functional will eventually converge to the fixed point of its transfer operator in the non-terminating limit. The fixed point then becomes the \emph{effective} evaluation functional of the uMPS in this limit.

Since PSRs were formulated with dynamical systems in mind, we typically consider sequences of \emph{arbitrary} length, whose probabilities are determined via a hidden state which evolves under a time-invariant update rule: the state update conditioned on an observation $y_{t}$ is computed as $\vec{x}_t = \frac{\pmb{\tau}_{y_t}\vec{x}_{t-1}}{\vec{\sigma}^T\pmb{\tau}_{y_t}\vec{x}_{t-1}}$ and the probability of an observation $y_{t}$ is $P(y_t|\vec{x}_t) = \vec{\sigma}^T\pmb{\tau}_{y_t}\vec{x}_{t}$. This allows us to deal more flexibly with arbitrary length sequences as compared to a generic uMPS. This flexibility for arbitrary-length sequences is precisely why we consider non-terminating uMPS: we can compute the conditional probability of a sequence $P(y_{t}|y_{1:t-1})$ by marginalizing over all possible future observations with a relatively simple equation:

\begin{small}
\begin{align}\label{eq:condprob}
P(y_{t}|y_{1:t-1}) &= \frac{\sum_{y_N, \ldots, y_{t+1}} P(y_N, \ldots, y_{t+1}, y_t, y_{t-1}, \ldots y_1)}{\sum_{y_N, \ldots, y_{t}} P(y_N, \ldots, y_t, y_{t-1}, \ldots, y_1)} \nonumber \\
&= \frac{\vec{\sigma}^\dagger \left(\sum_y \pmb{\tau}_y\right)^{N-(t+1)} \pmb{\tau}_{y_{t}} \dots \pmb{\tau}_1 \vec{\rho}_0}{\vec{\sigma}^\dagger \left(\sum_y \pmb{\tau}_{y}\right)^{N-t} \pmb{\tau}_{y_{t-1}} \dots \pmb{\tau}_1 \vec{\rho}_0} 
\end{align}
\end{small}

Thus the \emph{effective} evaluation functional $\vec{\sigma}^\dagger \left(\sum_y \pmb{\tau}_y\right)^{N-t}$ is a function of time and so different at every time step in general. However, if the transfer operator $\fattau = \sum_y \pmb{\tau}_y$ has some fixed point $\vec{\sigma}_*^\dagger$, i.e., $\vec{\sigma}_*^\dagger \fattau =  \vec{\sigma}_*^\dagger$, then the effective evaluation functional at timestep $t$ (which is $N-t$ steps away from the left boundary of the uMPS) will eventually converge to $\vec{\sigma}_*$ given a long enough sequence.\footnote{In fact, the effective evaluation functional will converge at an exponential rate towards the fixed point, so that $\| \vec{\sigma}_t - \vec{\sigma}_*\|^2 = \mathcal{O}(\exp \tfrac{N-t}{\xi})$, with a ``correlation length'' $\xi \simeq (1 - |\lambda_2| / |\lambda_1|)^{-1}$ set by the ratio of the largest and second largest eigenvalues of the transfer operator~\citep{orus2014}. Transfer operators with non-degenerate spectra can always be rescaled to have a unique fixed point, while matrices with degenerate spectra form a measure zero subset.} So, as long as we remain sufficiently far from the end of a sequence, the particular choice of the the left boundary vector does not matter.

Given that the non-terminating limit effectively replaces the uMPS evaluation functional $\vec{\sigma}$ by the fixed point $\vec{\sigma_*}$, consider what happens if we require $\vec{\sigma} = \vec{\sigma_*}$ to begin with, as is the case for PSRs. In this case, our effective evaluation functional remains independent of $t$, permitting a simple recursive state update rule that does not require fixing a prior sequence length or marginalizing over future observations. In this sense, a non-terminating uMPS is strictly equivalent to a PSR, a relationship which we will see holds between several other model families.

\begin{theorem}
Non-terminating uniform matrix product states are equivalent to uncontrolled predictive state representations.
\end{theorem}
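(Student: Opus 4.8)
The plan is to prove equivalence as a two-way inclusion between the two families of distributions: every uncontrolled PSR defines a non-terminating uMPS, and conversely every non-terminating uMPS admits an equivalent PSR. The natural dictionary, already suggested in the surrounding text, is to identify the PSR observable operators $\pmb{\tau}_y$ with the uMPS slices $\mathbf{A}^y$, the initial state $\vec{x}_0$ with the right boundary $\vec{\rho}_0$, and the evaluation functional $\vec{\sigma}$ with the left boundary. Under this correspondence the uMPS formula in Equation~\ref{eq_umps} and the PSR probability $\vec{\sigma}^\dagger \pmb{\tau}_{y_T}\cdots\pmb{\tau}_{y_1}\vec{x}_0$ coincide term by term, so the entire content of the theorem reduces to reconciling the boundary conditions: a PSR insists that $\vec{\sigma}$ be a fixed point of the transfer operator $\fattau = \sum_y \pmb{\tau}_y$, whereas a generic uMPS does not.

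First I would handle the easy direction, PSR $\Rightarrow$ non-terminating uMPS. Given a PSR, its fixed-point constraint $\vec{\sigma}^\dagger \fattau = \vec{\sigma}^\dagger$ means that applying the transfer operator any number of times leaves the evaluation functional unchanged, so $\vec{\sigma}^\dagger \fattau^{\,N-T} = \vec{\sigma}^\dagger$ for every $N$. Hence the marginalization over future observations in the non-terminating limit is trivial, and $\lim_{N\to\infty}\sum_{y_N}\cdots\sum_{y_{T+1}} P(y_1,\dots,y_N) = \vec{\sigma}^\dagger \pmb{\tau}_{y_T}\cdots\pmb{\tau}_{y_1}\vec{x}_0$, which is exactly the PSR probability. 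Non-negativity and the normalization $\vec{\sigma}^\dagger \vec{x}_0 = 1$ are inherited directly from the PSR axioms, so the PSR \emph{is} a non-terminating uMPS.

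The substantive direction is non-terminating uMPS $\Rightarrow$ PSR. Here I would use the observation already established in the text that taking the non-terminating limit replaces the left boundary $\vec{\sigma}$ by the fixed point $\vec{\sigma}_*$ of the transfer operator. Concretely, for the defining limit to exist, $\fattau$ must have spectral radius $1$ with a stable leading fixed point $\vec{\sigma}_*^\dagger \fattau = \vec{\sigma}_*^\dagger$, which one obtains after rescaling the slices $\mathbf{A}^y$ by the inverse leading eigenvalue (a rescaling that leaves all conditional probabilities in Equation~\ref{eq:condprob} invariant). I would then define the candidate PSR by the tuple $(\vec{\sigma}_*, \{\mathbf{A}^y\}, \vec{\rho}_0)$, rescaling $\vec{\rho}_0$ so that $\vec{\sigma}_*^\dagger \vec{\rho}_0 = 1$, and verify the three PSR axioms: normalization of the initial state holds by this rescaling, the fixed-point property of $\vec{\sigma}_*$ holds by construction, and non-negativity of evaluated probabilities is inherited because the non-terminating uMPS was assumed to define valid probabilities in the limit.

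The main obstacle is establishing that the non-terminating limit is well-defined and that the fixed point $\vec{\sigma}_*$ exists and is unique enough to serve as the PSR evaluation functional. The delicate case is a transfer operator with several eigenvalues of equal maximal modulus, where $\vec{\sigma}^\dagger \fattau^{\,N-T}$ may oscillate rather than converge, so no single limiting functional exists. I would argue that such transfer operators form a measure-zero set (as noted in the footnote on the correlation length $\xi$), that the non-degenerate case always admits a rescaling to a unique fixed point, and that the definition of a non-terminating uMPS already presupposes the limit exists, so we may restrict to the regime where $\vec{\sigma}_*$ is well-defined. Carefully tracking the rescalings of $\mathbf{A}^y$ and $\vec{\rho}_0$ so as to preserve both normalization and the invariance of all finite-length conditional probabilities is the one place where routine but easy-to-botch bookkeeping is required.
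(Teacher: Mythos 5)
Your proposal is correct and follows essentially the same route as the paper's proof: identify the two parameterizations term by term, reduce the question to the boundary/evaluation functional, rescale the cores by the leading eigenvalue of the transfer operator, and invoke convergence of the effective evaluation functional to the fixed point $\vec{\sigma}_*$ under the generic non-degeneracy assumption on the spectrum. The only difference is presentational --- you make the easy PSR~$\Rightarrow$~uMPS inclusion and the renormalization of $\vec{\rho}_0$ explicit, which the paper leaves implicit.
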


If we do not consider the non-terminating limit of a uMPS, the subsequence length and boundary choice will affect the probability computed. Then, we technically only have an equivalence with PSRs for a fixed sequence length (when the evaluation functional is a fixed point of the transfer operator) and no notion of recursive state update.

\paragraph{Stochastic Weighted Automata} A weighted automaton~(WA) is a tuple $(\mathds{C}^n, \vec{\sigma}, \{{\bf \pmb{\tau}}_y\}_{y\in\mathcal{O}}, \vec{x}_0)$ which computes a function $f(y_1, \ldots, y_T) = \vec{\sigma}^\dagger\pmb{\tau}_{y_T} \ldots \pmb{\tau}_{y_1}\vec{x}_0$.  In contrast with PSRs, no constraints are enforced on the weights of a weighted automaton in general.  A weighted automaton is \emph{stochastic} if it computes a probability distribution. These models constitute another class of models equivalent to PSRs and uMPS, and represent probability distributions over sequences of symbols from an alphabet $\mathcal{O}$. It is worth mentioning that the semantics of the probabilities computed by PSRs and stochastic WAs can differ: while PSR typically maintain a recursive state and are used to compute the probability of a given sequence conditioned on some past sequence, stochastic WA are often used to compute the joint distributions over the set of all possible finite length sequences (just as in uMPS). We refer the reader to \citet{Thon2015} for a unifying perspective.

\section{Non-Negative uMPS, Hidden Markov Models, and Probabilistic Automata}
\label{sec_mps_hmm}

We first point out a well-known connection between hidden Markov models (HMM) and matrix product states~\citep{kliesch2014matrix, Critch2014AlgebraicGO}. We refer to uMPS where all tensor cores and boundary vectors are non-negative as \emph{non-negative uMPS}. HMMs have been extensively studied in machine learning and are a common approach to modeling discrete-observation sequences where an unobserved hidden state undergoes Markovian evolution (where the future is independent of past given present) and emits observations at each time-step \citep{Rabiner1986AnIT}. HMMs can be thought of as a special case of PSRs where all parameters are constrained to be non-negative. Such models are usually characterized by an initial \emph{belief state} $\vec{x}_0$ and column-stochastic transition and emission matrices ${\bf A}$ and ${\bf C}$. 
 Formally, we give the following definition:
 \begin{definition}[Hidden Markov Model]\label{def_hmm}
An $n-$dimensional Hidden Markov Model for a set of discrete observations $\mathcal{O}$ is a stochastic process described by the tuple $(\mathds{R}^n, {\bf A}, {\bf C}, \vec{x}_0)$. The transition matrix ${\bf A}\in \mathds{R}^{n\times n}_{\geq 0}$ and the emission matrix ${\bf C} \in \mathds{R}^{|\mathcal{O}| \times n}_{\geq 0}$ are non-negative and column stochastic i.e. $\vec{{\bf 1}}^T {\bf A} = \vec{{\bf 1}}^T {\bf C} = \vec{{\bf 1}}^T$. The initial state $\vec{x}_0 \in \mathds{R}^{n}_{\geq 0}$ is also non-negative and is normalized $||\vec{x}_0||_1 = \vec{{\bf 1}}^T \vec{x} = 1$. 
\end{definition}
The state transitions through the simple linear update $\vec{x}_t' = \mathbf{A} \vec{x}_{t-1}$. To condition on observation $y$, we construct the diagonal matrix $\text{diag}(\mathbf{C}_{(y,:)})$ from the $y^{th}$ row of $\mathbf{C}$, and perform a normalized update $
\vec{x}_t | y_t = \frac{\text{diag}(\mathbf{C}_{(y,:)}) \vec{x}_t'}{\vec{\mathbf{1}}^T \text{diag}(\mathbf{C}_{(y,:)})\vec{x}_t'}$. This multi-step filtering process can be simplified using an alternative representation with \emph{observable operators} as ${\bf T}_y = \text{diag}({\bf C}_{(y,:)}){\bf A}$, where we rewrite the normalization constraints on operators as ${\bf 1}^T \sum_y {\bf T}_y = {\bf 1}^T$. Then, we recover a recursive state update $\vec{x}_t = \frac{{\bf T}_{y_{t-1}}\vec{x}_{t-1}}{\vec{\mathbf{1}}^T{\bf T}_{y_{t-1}}\vec{x}_{t-1}}$. Clearly, HMMs are a special case of PSRs where the parameters are restricted to be non-negative, and a special case of uMPS when the left boundary $\vec{\sigma}^\dagger = {\bf 1}^T$, the right boundary $\vec{\rho}_0 =\vec{x}_0$, and the tensor core slice $A^y = {\bf T}_y$.

\paragraph{Probabilistic Automata} Lastly, non-negative uMPS are equivalent to  probabilistic automata from formal language theory~\cite[Section 4.2]{denis2008rational}, which are in essence weighted automata where transition matrices need to satisfy stochasticity constraints. The strict equivalence between probabilistic automata and HMMs is proved in~\citet[Proposition 8]{dupont2005links} (see also Section 2.2 in~\citet{balle2014methods}). In addition, it is known that non-negative uMPS are strictly less expressive than general uMPS for representing probability distributions; a proof of this result in the context of formal languages can be found in~\citet{denis2008rational}.  We give a brief discussion of this next.

\subsection{The Negative Probability Problem and the Expressiveness of Finite PSRs}\label{sec_npp}

As noted by several authors \citep{jaeger2000observable, adhikary2019expressiveness}, PSRs lack a constructive definition; 
the definition of a PSR simply demands that the probabilities produced by the model be non-negative without describing constraints that can achieve this. Indeed, this is the cost of relaxing the non-negativity constraint on HMMs; it is undecidable whether a given set of PSR parameters will assign a negative probability to some arbitrary-length sequence \citep{denis2004learning, wiewiora2008modeling}, an issue known as the \emph{negative probability problem} (NPP). A similar issue arises in the many-body physics setting, where the analogous question of whether a general matrix product operator describes a non-negative quantum density operator is also undecidable~\citep{kliesch2014matrix}. In the special case where all PSR parameters are non-negative, we have a sufficient condition for generating valid probabilities, namely that the PSR is a hidden Markov model. Otherwise, the best available approach to characterize valid states for PSRs is whether they define a pointed convex cone (that includes the initial state) which is closed under its operators, and all points in it generate valid probabilities \citep{heller1965stochastic, jaeger2000observable, adhikary2019expressiveness}. 

While this undecidability is an inconvenient feature of PSRs, it turns out that constraining PSRs to have only non-negative entries comes with a reduction in expressive power; there are finite (bond/state) dimensional uMPS/PSRs which have no equivalent finite-dimensional HMM representations for arbitrary length sequences (for example, the probability clock in \citet{jaeger2000observable}). The general question of which uncontrolled PSRs have equivalent finite-dimensional HMMs (though not always discussed in those terms) is referred to by some as the \emph{positive realization problem} \citep{benvenuti2004tutorial, vidyasagar2011complete}. A common approach is to use the result that a PSR has an equivalent finite-dimensional HMM if and only if the aforementioned convex cone of valid initial states $\{\vec{x}_0\}$ for a set of given operators $\vec{\sigma}^\dagger$, $\{\fattau_y\}$ is $k$-polyhedral for some finite $k$ \citep{jaeger2000observable}. 

There has been some work trying to investigate whether it is possible to maintain the superior expressiveness of uMPS/PSRs while avoiding the undecidability issue. \citet{Zhao2010b, bailly2011quadratic,  adhikary2019expressiveness} explore this question in the machine learning context, while \citet{glasser2019expressive} consider this problem from the quantum tensor network perspective. We will explore these proposals shortly. When discussing the relative expressiveness of a model compared to a uMPS/PSR, 
if its bond dimension (i.e. state dimension) grows with sequence length, we say there is no equivalent parameterization of the uMPS/PSR distribution in this model class.

\section{Uniform Born Machines, Norm-Observable Operator Models, and Quadratic Weighted Automata}
\label{sec_noom_ubm}

Born machines (BMs)~\citep{han2018} are a popular class of quantum tensor networks that model probability densities as the absolute-square of the outputs of a tensor-train decomposition, and hence always output valid probabilities. 
As with uMPS, we will work with uniform Born machines (uBMs)~\citep{miller2020tensor}, for which the joint probability of $N$ discrete random variables $\{Y_i\}_{i=1}^N$ is computed as follows (with boundary vectors $\vec{\alpha}$ and $\vec{\omega}_0$ sandwiching a sequence of identical cores ${\bf A}$):
\begin{align}
    P(y_1, \ldots, y_N) = \text{uBM}_{y_1, \ldots, y_N} = \left|\vec{\alpha}^{~\dagger} \mathbf{A}^{y_N} \dots \mathbf{A}^{y_1} \vec{\omega}_0\right|^2
    \label{eq_bms}
\end{align}
We can re-write this decomposition showing uBMs to be special kinds of uMPS/PSR:
\begin{small}
\begin{equation*}
\begin{split}\label{eq_bmj}
    \text{uBM}_{y_1,\dots,y_N}
    ~~&=~~
    \left|\vec{\alpha}^{~\dagger} \mathbf{A}^{y_N} \dots \mathbf{A}^{y_{1}} \vec{\omega}_0  \right|^2 \\
    ~~&=~~
    \vec{\alpha}^{~\dagger} \mathbf{A}^{y_N}\dots \mathbf{A}^{y_{1}} \vec{\omega}_0~\vec{\omega}_0^\dagger (\mathbf{A}^{y_{1}})^\dagger \dots (\mathbf{A}^{y_N})^\dagger  \vec{\alpha}\\ 
    ~~&=~~
    \vec{\sigma}^{~\dagger}~\fattau_{y_N}~\dots~\fattau_{y_1}~\vec{\rho}_0
    \end{split}
\end{equation*}
\end{small}%
where $\fattau_{y} = \overline{\mathbf{A}^{y}} \otimes {\mathbf{A}^{y}}$, $\vec{\rho}_0 = \overline{\vec{\omega}_0} \otimes \vec{\omega}_0$, and $\vec{\sigma} = \overline{\vec{\alpha}} \otimes \vec{\alpha}$. This makes it clear that uBMs are a special class of MPS/PSRs, where the observable operators  $\fattau_y$ and boundary conditions must satisfy the additional requirement of having unit Kraus-rank (i.e., a symmetric unit Schmidt rank decomposition).\footnote{An operator ${\bf A}$ has unit Kraus-rank if it has a decomposition ${\bf A} = \overline{\bf X} \otimes {\bf X}$.} 

\paragraph{Norm Observable Operator Models}
Motivated by the NPP for PSRs, \citet{Zhao2010b} introduce norm-observable operator models or NOOMs. Coming from the PSR literature, they were designed to model joint distributions of observations as well as recursive state-updates to obtain conditional probabilities (analogous to PSRs in Section \ref{sec_psr}).  They bear a striking resemblance to uniform Born machines (uBMs) and the connection has not been previously explored. Both NOOMs and uBMs associate probabilities with quadratic functions of the state vector, with NOOMs directly using the squared 2-norm of the state to determine observation probabilities. While NOOMs were originally defined on the reals, we use a more general definition over complex numbers.
\begin{definition}[Norm-observable operator model]
An $n$-dimensional norm-observable operator model for a set of discrete observations $\mathcal{O}$ is a stochastic process described by the tuple $(\mathds{C}^n, \{\pmb{\phi}_y\}_{y \in \mathcal{O}}, \vec{\psi}_0)$. The initial state $\vec{\psi}_0 \in \mathds{C}^n$ is normalized by having unit $2$-norm i.e. $\|\vec{\psi}_0\|_2^2 = 1$. The operators $\pmb{\phi}_y \in \mathds{C}^{n \times n}$ satisfy $\sum_{y} \pmb{\phi}_y^\dagger \pmb{\phi}_y = \mathds{I}$.
\end{definition}
 These models avoid the NPP by using the 2-norm of the state to recover probability which, unlike for HMMs, is insensitive to the use of negative parameters in the matrices $\pmb{\phi}_{y}$. We write the joint probability of a sequence as computed by a NOOM and manipulate it using using the relationship between 2-norm and trace to show:
\begin{equation}
\label{eq_noom}
\begin{split}
    P(y_1, \dots, y_N) 
    &= \text{NOOM}_{y_1,\dots,y_N} \\
    &= \left|\left|
    \pmb{\phi}_{y_N} \dots \pmb{\phi}_{y_{1}} \vec{\psi}_0  \right|\right|_2^2 \\ &= \text{tr}(\pmb{\phi}_{y_N} \dots \pmb{\phi}_{y_{1}} \vec{\psi}_0 \vec{\psi}_0^\dagger (\pmb{\phi}_{y_{1}} )^\dagger \dots (\pmb{\phi}_{y_N})^\dagger) \\
    &= \vec{\mathds{I}}^\dagger\, \fattau_{y_N}~\dots~\fattau_{y_{1}}~\vec{\rho}_0
    \end{split}
\end{equation}
where $\fattau_{y} = \overline{\pmb{\phi}_{y}}\otimes \pmb{\phi}_{y} \in \mathds{C}^{n^2 \times n^2}$ and $\vec{\rho}_0 = \overline{\vec{\psi}_0}\otimes \vec{\psi}_0$. Equation~\ref{eq_noom} shows that NOOMs are a special subset of PSRs/MPS, as every finite-dimensional NOOM has an equivalent finite-dimensional PSR $(\mathds{C}^{n^2}, \{\overline{\pmb{\phi}_y} \otimes \pmb{\phi}_y\}_{y \in \mathcal{O}}, \overline{\vec{\psi}_0} \otimes \vec{\psi}_0)$ \citep{Zhao2010b}. From a quantum mechanical perspective, the unit rank constraint on NOOM initial states can be framed as requiring the initial state to be a pure density matrix.
We can also recursively update the state conditioned on observation $y_t$ as $\vec{\psi}_t = \frac{\pmb{\phi}_{y_t}\vec{\psi}_{t-1}}{\|\pmb{\phi}_{y_t}\vec{\psi}_{t-1}\|_2}$, where $y_t$ is observed with probability $P(y_t|\vec{\psi}_t) = \|\pmb{\phi}_{y_t} \vec{\psi}_{t-1}\|_2^2$. 

\paragraph{Non-terminating uniform BMs are NOOMs} Note that the NOOM joint distribution in Equation~\ref{eq_noom} is almost identical to that of uBMs in Equation \ref{eq_bmj}, with $\pmb{\tau}_{y}$ and $\vec{\rho}_0$ having unit Kraus rank; but the left boundary / evaluation functional $\vec{\sigma} = \vec{\alpha} \otimes \vec{\alpha}$ is replaced by $\vec{\sigma} = \vec{\mathds{I}}$ and necessarily is full Kraus rank. So how can we reconcile these nearly identical models? Similar to our approach in Section~\ref{sec_psr}, we can consider the uBM for an infinitely long sequence where the exact specification of the left boundary / evaluation functional ceases to matter in the non-terminating limit; the effective evaluation functional converges to the fixed point of the transfer operator and we have a notion of recursive state update. Assuming that the uBM transfer operator is a similarity transform away from a trace-preserving quantum channel (i.e., which is normalized by satisfying $\sum_y \pmb{\phi}_y^\dagger \pmb{\phi}_y = \mathds{I}$; see appendix for more details), we have that an arbitrary evaluation functional (with unit Kraus-rank) of such a uBM will eventually converge to $\vec{\mathds{I}}$, the NOOM's evaluation functional:
\begin{theorem}
Non-terminating uniform Born machines are equivalent to norm observable operator models.
\end{theorem}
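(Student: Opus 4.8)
The plan is to prove the two inclusions of the equivalence separately, in each case reducing to the non-terminating-limit argument already established for uMPS/PSRs in Section~\ref{sec_psr}. The structural facts I would rely on are those exposed in Equations~\ref{eq_bmj} and~\ref{eq_noom}: both uBMs and NOOMs are uMPS/PSRs whose observable operators $\fattau_y = \overline{\mathbf{A}^y}\otimes\mathbf{A}^y$ carry unit Kraus rank, and they differ \emph{only} in their evaluation functional, which is $\vec{\sigma} = \overline{\vec{\alpha}}\otimes\vec{\alpha}$ (unit Kraus rank) for a uBM but $\vec{\mathds{I}}$ (full Kraus rank) for a NOOM. The key observation tying these together is that the transfer operator $\fattau = \sum_y\overline{\mathbf{A}^y}\otimes\mathbf{A}^y$ is the vectorization of the completely positive map $\mathcal{E}(\rho) = \sum_y\mathbf{A}^y\rho(\mathbf{A}^y)^\dagger$, and that $\mathcal{E}$ is trace preserving exactly when $\sum_y(\mathbf{A}^y)^\dagger\mathbf{A}^y = \mathds{I}$; in that case $\vec{\mathds{I}}^\dagger\fattau = \vec{\mathds{I}}^\dagger$, i.e. $\vec{\mathds{I}}$ is the \emph{left} fixed point of the transfer operator.

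For the direction NOOM $\Rightarrow$ non-terminating uBM, I would take a NOOM $(\mathds{C}^n,\{\pmb{\phi}_y\},\vec{\psi}_0)$ and build a uBM with identical cores $\mathbf{A}^y = \pmb{\phi}_y$, right boundary $\vec{\omega}_0 = \vec{\psi}_0$, and an \emph{arbitrary} unit-Kraus-rank left boundary $\vec{\sigma} = \overline{\vec{\alpha}}\otimes\vec{\alpha}$. Marginalizing the length-$N$ uBM over its last $N-T$ outcomes collapses the trailing operators into $\fattau^{N-T}$, so the effective functional is $\vec{\sigma}^\dagger\fattau^{N-T}$, exactly as in Equation~\ref{eq:condprob}. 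Since the NOOM constraint $\sum_y\pmb{\phi}_y^\dagger\pmb{\phi}_y = \mathds{I}$ makes $\fattau$ the vectorization of a trace-preserving channel, its Perron eigenvalue is $1$ with left eigenvector $\vec{\mathds{I}}^\dagger$; invoking the Section~\ref{sec_psr} convergence argument, $\vec{\sigma}^\dagger\fattau^{N-T}$ tends to $\vec{\mathds{I}}^\dagger$ (up to an overall positive constant absorbed by normalization) as $N\to\infty$. The limiting joint distribution is then $\vec{\mathds{I}}^\dagger\fattau_{y_T}\cdots\fattau_{y_1}\vec{\rho}_0$, which is precisely the NOOM of Equation~\ref{eq_noom}.

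For the converse, non-terminating uBM $\Rightarrow$ NOOM, I would exploit the gauge freedom of uMPS. For a generic uBM the transfer operator is completely positive but not trace preserving, so its left fixed point is not $\vec{\mathds{I}}$ but rather $\vec{\mathbf{M}}^\dagger$, where $\mathbf{M} \succ 0$ is the positive-definite fixed point of the adjoint (Heisenberg) channel $\mathcal{E}^*(\mathbf{M}) = \sum_y(\mathbf{A}^y)^\dagger\mathbf{M}\mathbf{A}^y = \lambda\mathbf{M}$ and $\lambda > 0$ is the Perron eigenvalue; consequently the non-terminating interior distribution equals $\|\mathbf{M}^{1/2}\mathbf{A}^{y_T}\cdots\mathbf{A}^{y_1}\vec{\omega}_0\|_2^2$ up to normalization. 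Setting $\pmb{\phi}_y = \lambda^{-1/2}\mathbf{M}^{1/2}\mathbf{A}^y\mathbf{M}^{-1/2}$ yields operators with $\sum_y\pmb{\phi}_y^\dagger\pmb{\phi}_y = \mathds{I}$ (by direct substitution using the eigen-relation for $\mathbf{M}$), so the gauge-transformed cores define a valid NOOM; choosing $\vec{\psi}_0 \propto \mathbf{M}^{1/2}\vec{\omega}_0$ renormalized to unit $2$-norm fixes the initial state. A short computation shows the resulting NOOM distribution is again proportional to $\|\mathbf{M}^{1/2}\mathbf{A}^{y_T}\cdots\mathbf{A}^{y_1}\vec{\omega}_0\|_2^2$, matching the non-terminating uBM; equivalently, conjugation by $\mathbf{M}^{1/2}$ is exactly the similarity transform that sends the transfer operator to trace-preserving form, after which its left fixed point becomes $\vec{\mathds{I}}$ as claimed in the text.

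I expect the main obstacle to be justifying the existence of this similarity transform in the converse direction, i.e. showing that the Heisenberg fixed point $\mathbf{M}$ exists, is Hermitian and positive \emph{definite} (not merely semidefinite), and corresponds to a real, positive, non-degenerate Perron eigenvalue $\lambda$. This is a Perron--Frobenius statement for completely positive maps: irreducibility (primitivity) of $\mathcal{E}$ guarantees a simple leading eigenvalue and a full-rank fixed point, whence $\mathbf{M}^{1/2}$ is invertible and the transform is well defined. One must, however, explicitly set aside the degenerate cases---reducible channels, or those whose fixed point is rank-deficient---which, as the footnote in Section~\ref{sec_psr} notes, form a measure-zero subset. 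A secondary point is controlling the convergence of $\vec{\sigma}^\dagger\fattau^{N-T}$ to the fixed-point functional in both directions: the spectral gap between $\lambda$ and the next-largest eigenvalue modulus governs the exponential rate, so the non-terminating limit is well defined precisely when this gap is strictly positive.
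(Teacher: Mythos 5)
Your proposal is correct and follows essentially the same route as the paper's proof: rescale the cores by the square root of the Perron eigenvalue, invoke the quantum Perron--Frobenius theorem to obtain a full-rank positive-definite fixed point of the adjoint transfer operator, conjugate by its square root to make the transfer operator trace-preserving (your $\mathbf{M}^{1/2}$ is exactly the paper's $S=\sigma_*^{1/2}$), and then apply the non-terminating-limit convergence of the effective evaluation functional to $\vec{\mathds{I}}$. The only cosmetic difference is that you spell out both inclusions separately and write the transformed distribution as an explicit squared $2$-norm, whereas the paper folds the NOOM $\Rightarrow$ uBM direction into the general convergence argument.
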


With the above equivalence, we now turn to the question of how the expressiveness of uBM/NOOMs compares to non-negative uMPS/HMMs,  As we have seen, they are all special classes of uMPS, but with different constructions. \citet{glasser2019expressive} studied the expressiveness of \emph{non-uniform} BMs, showing that there are finite-dimensional non-uniform BMs that cannot be modeled by finite dimensional non-uniform HMMs, and conjecture that the reverse direction is also true. \citet{Zhao2010b} showed by example the existence of a NOOM (and so a non-terminating uBM) that cannot be modeled by any finite-dimensional HMM. However, they left open the question of whether HMMs were a subclass of NOOMs. We answer this question in the following theorem, which also implies the latter corollary through its equivalence with non-terminating uBM.

\begin{theorem}[HMM $\nsubseteq$ NOOM]
\label{thm_hmm_noom}
There exist finite-dimensional hidden Markov models that have no equivalent finite-dimensional norm-observable operator model.
\end{theorem}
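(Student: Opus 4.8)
The plan is to reduce the problem to the structural constraints that a \emph{finite} NOOM imposes, and then exhibit an explicit finite HMM whose distribution is incompatible with those constraints. Recall from Equation~\ref{eq_noom} that an $n$-dimensional NOOM is the special PSR $(\mathds{C}^{n^2}, \{\overline{\pmb{\phi}_y}\otimes\pmb{\phi}_y\}, \overline{\vec{\psi}_0}\otimes\vec{\psi}_0)$ with evaluation functional $\vec{\mathds{I}}$, whose observable operators $\fattau_y=\overline{\pmb{\phi}_y}\otimes\pmb{\phi}_y$ have unit Kraus rank. The two consequences I would lean on are: (i) the conditional (belief) state $\vec{\psi}_t\vec{\psi}_t^\dagger$ is always a \emph{rank-one} (pure) density operator, so every reachable predictive state of a NOOM lives on the low-dimensional pure-state manifold $\mathbb{CP}^{n-1}$; and (ii) the normalization $\sum_y \pmb{\phi}_y^\dagger\pmb{\phi}_y=\mathds{I}$ makes the transfer operator $\sum_y\fattau_y$ (a similarity transform of) a trace-preserving completely positive map. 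These are precisely the quantum features that an HMM, whose belief states are arbitrary classical mixtures in the simplex $\Delta^{n-1}$, need not respect.

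Concretely, I would \textbf{fix a small explicit HMM} $(\mathds{R}^n,\mathbf{A},\mathbf{C},\vec{x}_0)$, \textbf{assume for contradiction} that some finite-dimensional NOOM reproduces its distribution on all sequences, and then derive that the purifying NOOM dimension must grow without bound. A natural intermediate step is to rule out the easy diagnostics first: a single-symbol marginal $P(a^N)=\vec{\mathds{I}}^\dagger\fattau_a^N\vec{\rho}_0$ expands as $\sum_{i,j}c_{ij}(\overline{\lambda_i}\lambda_j)^N$ in the eigenvalues $\lambda_i$ of $\pmb{\phi}_a$, and one can check that the resulting ``poles'' of a one-symbol marginal can be matched by choosing enough $\lambda_i$; hence a pure spectral/Hankel-rank argument on a single marginal cannot separate the classes (indeed any finite-Hankel-rank distribution already \emph{is} a finite PSR, so Hankel rank only bounds the PSR dimension, not the NOOM dimension). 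The separation must therefore come from the \emph{joint} multi-symbol structure where the non-commuting operators $\pmb{\phi}_a,\pmb{\phi}_b$ interact. My plan is to track the set of reachable predictive states: on the HMM side these are forced to be genuine classical mixtures of distinguishable continuations, whereas on the NOOM side they must all be purifiable as rank-one states $\vec{\psi}_w$ consistent with the Born rule $P(y\mid\vec{\psi}_w)=\|\pmb{\phi}_y\vec{\psi}_w\|_2^2$; I would show that encoding the HMM's family of mixtures into such pure states forces the Hilbert-space dimension to increase with the length of the conditioning string, contradicting finiteness.

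The \textbf{main obstacle} is isolating the right length-robust invariant that is satisfied by \emph{every} finite NOOM but violated by the chosen HMM, and then proving the accompanying unbounded-dimension lower bound. The difficulty is precisely that NOOMs are strictly more expressive than HMMs in other respects (the probability-clock-style examples of \citet{Zhao2010b} already show NOOM $\nsubseteq$ HMM), so the invariant cannot be a crude spectral or cardinality statement; it must capture the tension between \emph{classical convex mixing} (HMM belief updates as normalized nonnegative combinations) and \emph{pure-state/unit-Kraus-rank evolution} (NOOM belief updates staying extremal on $\mathbb{CP}^{n-1}$). I expect the cleanest realization to come from the convex-geometry characterization already cited in Section~\ref{sec_npp}: an uncontrolled PSR has a finite HMM realization iff its cone of valid states is $k$-polyhedral, and the analogous question for NOOMs concerns which PSR cones admit a finite \emph{pure-state} Kraus-rank-one purification. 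Exhibiting an HMM whose purification cone fails to be finitely generated in this quantum sense, and verifying this rigorously rather than heuristically, is where the real work lies; the rest of the argument should follow from the structural reduction above.
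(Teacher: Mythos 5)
You have correctly located the obstruction the paper exploits --- the tension between HMM belief states being genuine convex mixtures and NOOM predictive states being forced onto the rank-one (pure) manifold --- but your proposal stops exactly where the proof has to start, and the route you sketch for closing the gap is not the one that works. The missing ingredient is the uniqueness of minimal PSR realizations up to similarity (Lemma~1 in the appendix, from \citet{Thon2015}): any two finite-dimensional realizations of the same distribution are related by an invertible linear map composed with the linear projections onto their minimal realizations. This is what makes your ``invariant'' actionable. Linear maps preserve affine combinations, so if the HMM reaches a state satisfying $\vec{x}_k = \alpha\vec{x}_i + \beta\vec{x}_j$ with $\alpha,\beta>0$, $\alpha+\beta=1$, and $\vec{x}_i,\vec{x}_j$ linearly independent, then the corresponding reachable states of \emph{any} equivalent NOOM must satisfy $\vec{\psi}_k = \alpha\vec{\psi}_i+\beta\vec{\psi}_j$; but a convex combination of two distinct vectorized rank-one projectors is not rank one, contradicting the unit-Kraus-rank constraint on NOOM states. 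The paper then exhibits a concrete $2$-state HMM where $\vec{x}_2 = 0.6\,\vec{x}_0 + 0.4\,\vec{x}_1$ with $\vec{x}_0,\vec{x}_1$ independent, and the contradiction is reached at a single finite time step.

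By contrast, your plan to show that ``the purifying NOOM dimension must grow without bound with the length of the conditioning string'' is both harder and unnecessary: no dimension-growth or cone-finite-generation argument is needed, and pursuing one would leave you fighting exactly the worry you raise yourself (that NOOMs are more expressive than HMMs in other respects, so crude spectral or cardinality invariants fail). Your instinct to rule out single-symbol Hankel/spectral diagnostics is sound but tangential. As written, the proposal is a correct identification of the right phenomenon together with an explicit admission that the decisive step is unproven, so it does not constitute a proof; to repair it, replace the unbounded-dimension strategy with the realization-equivalence argument above.
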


\begin{corollary}[uBM $\nsubseteq$ HMM and HMM $\nsubseteq$ uBM]
There exist finite-dimensional non-terminating uniform Born machines that have no equivalent finite-dimensional hidden Markov models, and vice-versa.
\end{corollary}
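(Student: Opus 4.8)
The plan is to derive this corollary as a transfer result rather than to build a new counterexample from scratch: both halves follow by pushing already-established facts through the equivalence between non-terminating uniform Born machines and norm-observable operator models established above. Concretely, I would combine three ingredients: (i) that equivalence, which lets me identify every finite-dimensional non-terminating uBM with a finite-dimensional NOOM of the same state dimension and conversely; (ii) Theorem~\ref{thm_hmm_noom}, which supplies a finite-dimensional HMM admitting no finite-dimensional NOOM; and (iii) the example of \citet{Zhao2010b}, which supplies a finite-dimensional NOOM admitting no finite-dimensional HMM. Since the corollary is exactly the conjunction of ``HMM $\nsubseteq$ uBM'' and ``uBM $\nsubseteq$ HMM,'' each direction reduces to relabeling one of these facts across the equivalence.

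For the direction HMM $\nsubseteq$ uBM, I would take the HMM guaranteed by Theorem~\ref{thm_hmm_noom}. If this HMM had an equivalent finite-dimensional non-terminating uBM, then by the equivalence that uBM would itself be a finite-dimensional NOOM computing the same distribution, contradicting the theorem; hence no finite non-terminating uBM can reproduce it. For the direction uBM $\nsubseteq$ HMM, I would start from the finite NOOM of \citet{Zhao2010b} that has no finite-dimensional HMM. Viewing it through the equivalence exhibits it as a finite-dimensional non-terminating uBM, and any finite HMM equivalent to this uBM would be a finite HMM equivalent to the original NOOM, which is impossible. Together these give finite-dimensional non-terminating uBMs with no finite HMM and finite-dimensional HMMs with no finite uBM, which is the statement.

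The argument has no deep obstacle of its own --- essentially all of the difficulty is already absorbed into Theorem~\ref{thm_hmm_noom} --- so the only points requiring genuine care are bookkeeping ones. First, I must check that the uBM--NOOM equivalence is \emph{finiteness preserving in both directions}: a finite NOOM must yield a finite non-terminating uBM of comparable dimension and vice versa, so that ``has no finite representation'' transfers faithfully. This is where I would be careful to invoke the hypothesis under which that equivalence was proved, namely that the relevant transfer operator is a similarity transform of a trace-preserving channel, and to confirm that \citet{Zhao2010b}'s example (and the HMM from Theorem~\ref{thm_hmm_noom}) meet it. Second, I would make sure the notion of ``equivalent'' is used consistently --- equality of the joint distribution on all finite sequences in the non-terminating limit --- across the HMM, NOOM, and uBM descriptions, so that the contradictions in each direction are between models computing genuinely the same function.
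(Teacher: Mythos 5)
Your proposal is correct and matches the paper's own proof: both directions are obtained by transferring the known facts NOOM $\nsubseteq$ HMM (the probability clock of \citet{Zhao2010b}) and HMM $\nsubseteq$ NOOM (Theorem~\ref{thm_hmm_noom}) across the non-terminating uBM--NOOM equivalence of Theorem 4.1. Your added care about finiteness preservation and the consistency of the notion of equivalence is sensible bookkeeping but does not change the route.
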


\paragraph{Quadratic Weighted Automata} Finally, we note that Quadratic Weighted Automata (QWA) \citep{bailly2011quadratic}, developed in the stochastic weighted automata literature, are  equivalent to uBM. \citet{bailly2011quadratic} suggest that QWA $\nsubseteq$ HMM and that HMM $\nsubseteq$ QWA, but do not provide a proof. To the best of our knowledge, the proof we provide is the first to formally show the non-equivalence of QWA and HMM. 

\section{Locally Purified States and Hidden Quantum Markov Models}
\label{sec_hqmms}
While uBMs/NOOMs are constructive models guaranteed to return valid probabilities, they still aren't expressive enough to capture all HMMs, a fairly general class. Hence, it may be desirable to identify a construction that is more expressive than these models but still gives valid probabilities. Locally Purified States (LPS) were proposed as a tensor-network model of discrete multivariate probability distributions inspired from techniques used in the simulation of quantum systems. \citet{glasser2019expressive} show that these models are not only strictly more expressive than non-uniform HMMs, but also correspond directly to local quantum circuits with ancillary qubits -- serving as a guide to design quantum circuits for probabilistic modeling. We arrive at the LPS model from the MPS model essentially by marginalizing over an additional mode -- called the ``purification dimension'' -- in each of the MPS tensors.  The rank of an LPS, also called its puri-rank, is defined the same way as the bond dimension (or TT-rank) for the MPS. The corresponding uniform LPS defines the unnormalized probability mass function over $N$ discrete random variables $\{Y_i\}_{i=1}^N$ as follows:
\begin{equation}
    \begin{split}
    \label{eq_joint_lps}
    &P(y_1, \ldots, y_N) = \text{uLPS}_{y_1, \ldots, y_N}\\ 
    &= \left( \sum_{\beta_L} \overline{\mathbf{K}}_{\beta_L, L}^T \otimes \mathbf{K}_{\beta_L, L}^T \right) \left( \sum_{\beta} \overline{\mathbf{K}}_{\beta, y_N} \otimes \mathbf{K}_{\beta, y_N} \right) \cdots \\
    &\cdots \left( \sum_{\beta} \overline{\mathbf{K}}_{\beta, y_1} \otimes \mathbf{K}_{\beta, y_1} \right) \left( \sum_{\beta_R} \overline{\mathbf{K}}_{\beta_R, R} \otimes \mathbf{K}_{\beta_R, R} \right)
    \end{split}
\end{equation} 

\paragraph{Hidden Quantum Markov Models}
Hidden quantum Markov models (HQMMs) were developed by \citet{monras2010hidden} as a quantum generalization of hidden Markov models that can model joint probabilities of sequences and also allow for recursive state updates we have described previously. \citet{srinivasan2017learning, srinivasan2018learning} specifically develop HQMMs by constructing quantum analogues of classical operations on graphical models, and show that HQMMs are a more general model class compared to HMMs. \citet{adhikary2019expressiveness} on the other hand develop HQMMs by relaxing the unit Kraus-rank constraint on NOOM operators and initial state. We give a formal definition of these models here (noting that the Choi matrix ${\bf C}_y$ is a particular reshuffling of the sum of superoperators ${\bf L}_y$ defined below, see \citet{adhikary2019expressiveness}):

\begin{definition}[Hidden Quantum Markov Models]
\label{def:hqmm}
An $n^2-$dimensional hidden quantum Markov model for a set of discrete observations $\mathcal{O}$ is a stochastic process described by the tuple $(\mathds{C}^{n^2}, \vec{\mathds{I}}, {\{\bf L}_y\}_{y\in \mathcal{O}}, \vec{\rho})$. The initial state $\vec{\rho}_0 \in \mathds{C}^{n^2}$ is a vectorized unit-trace Hermitian PSD matrix of arbitrary rank, so $\vec{\mathds{I}}^{~T} \vec{\rho}_0=1$. The Liouville operators ${\bf L}_y \in \mathds{C}^{n^2\times n^2}$ (with corresponding Choi matrices ${\bf C}_y$) are trace-preserving (TP) i.e. $\vec{\mathds{I}}^T \left(\sum_y {\bf L}_y \right) = \vec{\mathds{I}}^T$, and completely positive (CP) i.e. ${\bf C}_y \geq 0$.
\end{definition}

The CP-TP condition on the operator ${\bf L}_y$ implies that we can equivalently write it via the Kraus decomposition as ${\bf L}_y = \left( \sum_{\beta} \mathbf{K}_{\beta, y} \otimes \overline{\mathbf{K}}_{\beta, y} \right)$, using Kraus operators $\mathbf{K}_{\beta, y}$ \citep{Kraus1971, adhikary2019expressiveness}. Intuitively, what makes HQMMs more general than NOOMs is that its state can be a vectorized density matrix of arbitrary rank and the superoperators can have arbitrary Kraus-rank, while NOOMs require both these ranks to be 1. With this in mind, we can write and manipulate the joint probability of a sequence of $N$ observations as:
\begin{equation}
\label{eq_hqmm_lps}
\begin{split}
    &P(y_1, \ldots, y_N) = \text{HQMM}_{y_1, \ldots, y_N} = \vec{\mathds{I}}^T {\bf L}_{y_1} \cdots {\bf L}_{y_N}  \vec{\rho}_0  \\
    &= \vec{\mathds{I}}^T \left( \sum_{\beta} \mathbf{K}_{\beta, y_N} \otimes \overline{\mathbf{K}}_{\beta, y_N} \right)  \left( \sum_{\beta} \overline{\mathbf{K}}_{\beta, Y_1} \otimes \mathbf{K}_{\beta, y_1} \right) \vec{\rho}_0
    \end{split}
\end{equation}
The joint probability computation makes it clear that HQMMs are a class of PSRs, and the manipulation shows how they are equivalent to a uLPS where the left boundary condition is $\vec{\mathds{I}}$. We also compute the recursive state update conditioned on observation $y$ as  $\vec{\rho}_{t+1} = \frac{{\bf L}_y\vec{\rho}_t}{\vec{\mathds{I}}^T{\bf L}_y\vec{\rho}_t}$ and 
the probability of an observation $y$ is $P(y|\rho_t) = \vec{\mathds{I}}^T {\bf L}_y \vec{\rho}$.

\paragraph{Non-terminating Uniform LPS are HQMMs}
Equation \ref{eq_hqmm_lps} shows that every HQMM is a uLPS, but we also consider in what sense every uLPS is an HQMM: the transfer operator of arbitrary CP maps with unit spectral radius\footnote{This condition is necessary for probability distributions such as Equations~\ref{eq_bmj} and \ref{eq_joint_lps} to be properly normalized.} is a similarity transform away from that of a CP-TP map \citep{perez2006matrix}, so $\vec{\mathds{I}}$ is related to such a fixed point by such a similarity transform. Thus, every non-terminating uLPS has an equivalent HQMM and allows for an HQMM-style recursive state update. This is the same reasoning behind the equivalence between non-terminating uBMs (with CP maps) and NOOMs (with CP-TP maps).
\begin{theorem}
Non-terminating uniform locally purified states are equivalent to hidden quantum Markov models.
\end{theorem}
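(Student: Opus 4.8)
The plan is to establish the equivalence by proving containment in both directions, mirroring the structure used for non-terminating uMPS $\equiv$ PSRs and non-terminating uBMs $\equiv$ NOOMs. The forward direction, that every HQMM is a non-terminating uLPS, is essentially immediate: Equation~\ref{eq_hqmm_lps} already exhibits the HQMM joint distribution in uLPS form, since the CP-TP operators ${\bf L}_y$ decompose as Kraus sums $\sum_\beta \mathbf{K}_{\beta,y} \otimes \overline{\mathbf{K}}_{\beta,y}$, matching the uLPS cores, and the unit-trace Hermitian PSD initial state $\vec{\rho}_0$ can be written as a vectorized sum of rank-one projectors, matching the uLPS right boundary $\sum_{\beta_R} \overline{\mathbf{K}}_{\beta_R,R} \otimes \mathbf{K}_{\beta_R,R}$. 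The trace-preservation condition $\vec{\mathds{I}}^T (\sum_y {\bf L}_y) = \vec{\mathds{I}}^T$ says precisely that $\vec{\mathds{I}}$ is the fixed point of the transfer operator, so the HQMM evaluation functional is already the stationary one that a non-terminating uLPS converges to.

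The reverse direction, that every non-terminating uLPS is an HQMM, is where the real work lies, and I would invoke the non-terminating limit exactly as in Section~\ref{sec_psr}. First I would identify the uLPS transfer operator $\fattau = \sum_y \fattau_y$, with $\fattau_y = \sum_\beta \overline{\mathbf{K}}_{\beta,y} \otimes \mathbf{K}_{\beta,y}$, as a completely positive (CP) map, and note that for the joint distribution in Equation~\ref{eq_joint_lps} to be normalized this map must have unit spectral radius. I would then reuse the limiting argument of Equation~\ref{eq:condprob}: the effective evaluation functional $\vec{\sigma}^\dagger \fattau^{N-t}$ converges to the fixed point of $\fattau$ as $N \to \infty$, so the particular left boundary of the uLPS is washed out. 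The key structural input is the result of \citet{perez2006matrix} that a CP map with unit spectral radius is a similarity transform away from a CP-TP map: concretely, there is an invertible $\mathbf{S}$ for which the conjugated operators $\mathbf{S}\,\fattau_y\,\mathbf{S}^{-1}$ are CP-TP, whence their transfer operator fixes $\vec{\mathds{I}}$ and the fixed point of the original $\fattau$ is related to $\vec{\mathds{I}}$ by this same transform. Conjugating the entire uLPS by $\mathbf{S}$ leaves all joint probabilities unchanged while converting it into an HQMM with evaluation functional $\vec{\mathds{I}}$.

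The hard part will be verifying that the similarity transform yields a bona fide HQMM rather than merely a PSR with the right evaluation functional, since complete positivity is not preserved under arbitrary conjugation. Here I would lean on the cited result specifically guaranteeing that $\mathbf{S}$ can be chosen so the conjugated operators are CP-TP, and then separately check that the transformed right boundary $\mathbf{S}\vec{\rho}_0$ remains the vectorization of a unit-trace Hermitian PSD matrix after renormalization, so that it is a valid HQMM initial state. Establishing the complete positivity of the operators and the positive-semidefiniteness of the initial state simultaneously under a single fixed $\mathbf{S}$ is the delicate step. I would dispatch the degenerate-spectrum case, where the fixed point need not be unique, by the same measure-zero argument invoked for the uMPS transfer operator, restricting attention to operators with non-degenerate spectra that can be rescaled to a unique fixed point.
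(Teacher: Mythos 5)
Your proposal follows essentially the same route as the paper's proof, which likewise reduces the claim to the rescaling, similarity-transform, and fixed-point-convergence argument already used for the uBM/NOOM case (Theorem 4.1), with $\vec{\mathds{I}}$ emerging as the effective evaluation functional of the transformed, trace-preserving transfer operator in the non-terminating limit. The "delicate step" you flag is in fact automatic: the similarity transform acts on the Kraus operators themselves as $\mathbf{K}_{\beta,y} \mapsto \sigma_*^{1/2}\,\mathbf{K}_{\beta,y}\,\sigma_*^{-1/2}$ and on the state as $\rho_0 \mapsto \sigma_*^{1/2}\,\rho_0\,\sigma_*^{1/2}$ with $\sigma_*^{1/2}$ Hermitian positive definite, so complete positivity of the maps and positive-semidefiniteness of the boundary are preserved by construction.
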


While it is already known that HMMs are a strict subset of HQMMs (since HQMMs also contain NOOMs which cannot always be modeled by a HMM), \citet{adhikary2019expressiveness} left open the possibility that every HQMM could have an equivalent NOOM in with some higher dimensional state. In light of Theorem \ref{thm_hmm_noom}, we can say this is not possible as NOOMs do not capture HMMs, while HQMMs can. We defer a longer discussion of how the expressiveness of HQMMs/uLPS compares to uMPS to the appendix (see Appendix~\ref{app_hqmm_express}), with the simple remark that we are not currently aware of an HQMM without an equivalent uMPS, although we may be able to adapt an example from the LPS $\subset$ MPS result from the non-uniform case \citep{glasser2019expressive}.
\begin{corollary}[NOOM $\subset$ HQMM]
\label{corollary_hqmm_noom}
Finite dimensional norm-observable operator models are a strict subset of finite dimensional hidden quantum Markov models.
\end{corollary}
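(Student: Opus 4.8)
The plan is to establish the containment NOOM $\subseteq$ HQMM by a direct definitional embedding, and then to upgrade it to a strict containment by exhibiting a finite-dimensional HQMM that no finite-dimensional NOOM can match, leveraging Theorem~\ref{thm_hmm_noom}. The containment direction should be essentially immediate: the manipulation in Equation~\ref{eq_noom} already rewrites an $n$-dimensional NOOM as the PSR $(\mathds{C}^{n^2}, \vec{\mathds{I}}, \{\overline{\pmb{\phi}_y}\otimes\pmb{\phi}_y\}, \overline{\vec{\psi}_0}\otimes\vec{\psi}_0)$, and I would simply identify this tuple as an HQMM in the sense of Definition~\ref{def:hqmm}.

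To verify that identification I would check each HQMM constraint in turn. First, I would read $\fattau_y = \overline{\pmb{\phi}_y}\otimes\pmb{\phi}_y$ as a Liouville operator $\mathbf{L}_y$ with a \emph{single} Kraus operator (i.e. unit Kraus-rank), which is automatically completely positive. Second, the NOOM normalization $\sum_y \pmb{\phi}_y^\dagger\pmb{\phi}_y = \mathds{I}$ is exactly the trace-preservation condition $\vec{\mathds{I}}^T(\sum_y \mathbf{L}_y) = \vec{\mathds{I}}^T$ written in Kraus form. Third, $\vec{\rho}_0 = \overline{\vec{\psi}_0}\otimes\vec{\psi}_0$ is the vectorization of the rank-one matrix $\vec{\psi}_0\vec{\psi}_0^\dagger$, which is Hermitian, PSD, and of unit trace since $\|\vec{\psi}_0\|_2^2 = 1$; a rank-one pure state is in particular a unit-trace PSD matrix of arbitrary rank. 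These are precisely the relaxations that \citet{adhikary2019expressiveness} identify as passing from NOOMs to HQMMs, so every finite NOOM is a finite HQMM with unit-Kraus-rank operators and a pure initial state.

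For strictness I would argue by combining two facts that are already available. Theorem~\ref{thm_hmm_noom} supplies a finite-dimensional HMM whose distribution has no equivalent finite-dimensional NOOM. At the same time, HMMs are known to embed in HQMMs as the ``classical'' case, in which density matrices are diagonal and the Liouville map acts via Kraus operators of the form $\sqrt{[\mathbf{T}_y]_{ij}}\,\mathbf{e}_i\mathbf{e}_j^T$ reproducing the column-stochastic update \citep{srinivasan2018learning}; this embedding preserves finiteness of the state dimension. Applying this embedding to the witnessing HMM yields a finite-dimensional HQMM computing the very distribution that Theorem~\ref{thm_hmm_noom} shows admits no finite NOOM. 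Hence there is a finite HQMM lying outside the NOOM class, and the containment of the previous paragraph is strict.

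The only real care needed, and the step I expect to be most fiddly, is the bookkeeping around conventions in the two embeddings: matching the tensor-product and conjugation ordering between the NOOM form $\overline{\pmb{\phi}_y}\otimes\pmb{\phi}_y$ and the Kraus form $\sum_\beta \mathbf{K}_{\beta,y}\otimes\overline{\mathbf{K}}_{\beta,y}$ of Definition~\ref{def:hqmm}, and confirming that the classical HMM-to-HQMM embedding keeps the underlying Hilbert-space dimension (hence the HQMM dimension $n^2$) finite. Neither is conceptually difficult --- both are routine verifications --- so the substantive content of the corollary is carried entirely by Theorem~\ref{thm_hmm_noom} together with the known inclusion of HMMs in HQMMs; the corollary is essentially a repackaging of ``HMM $\nsubseteq$ NOOM, yet HMM $\subseteq$ HQMM.''
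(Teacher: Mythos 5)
Your proposal is correct and follows essentially the same route as the paper: establish NOOM $\subseteq$ HQMM (the paper cites \citet{adhikary2019expressiveness} where you verify the embedding directly), then obtain strictness from Theorem~\ref{thm_hmm_noom} together with the fact that every finite HMM embeds as a finite HQMM. The extra detail you supply on the unit-Kraus-rank identification and the classical HMM-to-HQMM embedding is consistent with the paper's (terser) argument.
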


We are not aware of any proposals from  the weighted automata literature that are analogous to these uLPS/HQMMs.

\paragraph{Expressiveness of HQMMs (uLPS) and PSRs (uMPS)} We have determined that HQMMs are the most constructive known subclass of PSRs (containing both NOOMs and HMMs), yet the question of whether there is a `gap' between HQMMs and PSRs, i.e., if there is a PSR which has no finite-dimensional HQMM representation, is still open to the best of our knowledge. The results in \cite{glasser2019expressive} and \citet{de2013purifications} show that MPS are more expressive than LPS in the \emph{non-uniform} case, but their technique cannot be easily adapted to the uniform case. We are not aware of an example of a PSR with no equivalent finite-dimensional HQMM. A longer discussion of this problem is presented in Appendix \ref{app_hqmm_express}.
\section{Conclusion}
We presented uniform matrix product states and their various subclasses, and showed how they relate to previous work in the stochastic processes and weighted automata literature.  In discussing the relative expressiveness of various models, we discuss if we can find an \emph{equivalent finite-dimensional parameterization} in another model class, but we do not discuss the relative compactness of various parameterizations. \citet{glasser2019expressive} do discuss this for the non-uniform case, and this could be an interesting direction to explore for the uniform case. We also speculate that the connections laid out here may allow spectral learning algorithms commonly used for PSRs and weighted automata suitable \citep{hsu2012spectral, balle2014spectral, hefny2015supervised} for learning uMPS, and an algorithm for constrained optimization on the Stiefel manifold \citep{adhikary2019expressiveness} suitable for learning uLPS \emph{with appropriate constraints}. Future work will involve adapting these algorithms so they can be transferred between the two fields.

We can extend our analyses to \emph{controlled} stochastic processes. Controlled generalizations of uMPS may be developed through matrix product operators \citep{Murg2008MatrixPO, Chan2016MatrixPO} that append an additional open index at each core of a uMPS, which we can associate with actions. We can also develop input-output versions of uniform tensor networks and uncontrolled stochastic process models, similar to input-output OOMs from \citet{Jaeger2003DiscretetimeDO}. We briefly describe such extensions for HQMMs and uLPSs in Appendix~\ref{app_sec_controlled_stoc_proc}, showing that they generalize recently proposed quantum versions of partially observable Markov decisions processes \citep{qomdp, YingYing, Cidre2016}. With this connection, we also find that the undecidability of perfect planning (determining if there exists a policy that can deterministically reach a goal state from an arbitrary initial state in finite steps) established for quantum POMDPs by \citet{qomdp} also extends to these generalizations. We leave a longer discussion for future work.

\newpage 

\bibliographystyle{apalike}
\bibliography{biblio.bib}

\newpage
\begin{appendix}

\section{Proofs of Theorems}

\begin{customthm}{2.1}
\label{app_thm_umps_psr}
Non-terminating uniform matrix product states are equivalent to uncontrolled predictive state representations.
\end{customthm}
\begin{proof}
Consider a uMPS $\{\vec{\sigma}, \{\fattau_y\}_y, \vec{\rho}_0\}$ defined over sequences of length $N$ and define the transfer operator $\fattau := \sum_y \fattau_y$. Say we want to compute the probability of observing $y_t$ at time $t$, conditioned on past observations $y_{1:t-1}$. This requires us to not only condition on the past observations, but also marginalize over all possible future sequences $y_{t:N}$ (for example, see Theorem $1$ in \citet{miller2020tensor} for the case of uBMs). The probability is calculated as follows
\begin{equation}
    P(y_t~|~y_{1:t}) = \frac{\vec{\sigma}^\dagger\fattau^{N-t} \fattau_{y_t} \left(\fattau_{y_{t-1}}\cdots \fattau_{y_1}\vec{\rho}_0\right)}
    {
    \vec{\sigma}^\dagger\fattau^{N-t} \fattau \left(\fattau_{y_{t-1}}\cdots \fattau_{y_1}\vec{\rho}_0\right)
    }
    = 
    \frac{\vec{\sigma}_t^\dagger \fattau_{y_t} \left(\fattau_{y_{t-1}} \cdots \fattau_{y_1}\vec{\rho}_0\right)}
    {
    \vec{\sigma}_t^\dagger \fattau \left(\fattau_{y_{t-1}} \cdots \fattau_{y_1} \vec{\rho}_0\right)
    }
    \label{app_eq_umps_prob}
\end{equation}
Here, we have defined the \emph{effective} evaluation functional $\vec{\sigma}_t = \left(\fattau^{N-t}\right)^\dagger \vec{\sigma}$ at time $t$. Intuitively, this represents the evaluation functional after having marginalized over all possibilities for the remaining $N-t$ time steps. We perform this marginalization via the transfer operator $\fattau^\dagger$. As $N \to \infty$ for fixed $t$, the trajectory of the effective evaluation functional will be strongly determined by the spectral properties of $\fattau$. Here, we restrict ourselves to transfer operators where the magnitudes of the two largest eigenvalues are distinct. Note that this is not a strong requirement, given that matrices with degenerate spectra form a measure zero subset of general square matrices.\footnote{uMPS which violate this non-degeneracy condition are associated with ``(anti-)ferromagnetic order'' in quantum-many body physics, and can produce \emph{periodic} behavior in the limit of non-terminating sequences~\citep{Cuevas2017IrreducibleFO}. Although we don't give the full details here, such degenerate uMPS can still be converted to equivalent PSR by redefining the observation space to consist of $k$-tuples of adjacent observations, for $k$ the periodicity of the uMPS.}

Now if the top eigenvalue of $\fattau^\dagger$ is $\lambda_* = 1$, $\vec{\sigma}_{t}$ will eventually converge to $\vec{\sigma}_*$, the corresponding fixed point of $\fattau^\dagger$, owing to the second largest eigenvalue of $\fattau^\dagger$ satisfying $|\lambda_2| < 1$. The probability computation in Equation~\ref{app_eq_umps_prob} then reduces to that of a PSR with evaluation functional $\vec{\sigma}_*$. Note that in a PSRs, the $\vec{\sigma}$ is chosen precisely to be the fixed point of the adjoint transfer operator via the normalization requirement $\vec{\sigma}^\dagger \fattau = \vec{\sigma}^\dagger$, forcing $\lambda_* = 1$.

If $\lambda_* \neq 1$ for a uMPS model, we can simply rescale our matrices $\fattau$ to obtain a properly normalized transfer operator, by replacing $\fattau_y$ with $\fattau_y/\lambda_*$. Making this substitution in Equation~\ref{app_eq_umps_prob}, we see that the numerator and denomenator are rescaled by the same constant $\lambda_*^{t-N}$, leaving the overall probability distribution unchanged. As before, this new model produces exactly the same probabilities as a PSR with the evaluation functional $\vec{\sigma}_*$, for $\vec{\sigma}_*$ the fixed point of $\fattau^\dagger$.
\end{proof}

\newpage
\begin{customthm}{4.1}
\label{app_thm_bm_noom}
Non-terminating uniform Born machines are equivalent to norm observable operator models.
\end{customthm}
\begin{proof}
Note that uniform Born machines and norm observable operator models differ only in their evaluation functionals, and that NOOM transfer operators are required to be trace-preserving. While uBMs can have an arbitrary Kraus-rank 1 evaluation functional, NOOMs are restricted to the higher-rank identity evaluation functional $\vec{\mathds{I}}$. Both models have operators of the form $\fattau_y = \phi_y \otimes \phi_y$, which are completely positive with Kraus-rank 1. Therefore, we need to show that an arbitrary uBM is equivalent to one where the transfer operator $\fattau = \sum_y \fattau_y$ is trace-preserving, which is the same as its adjoint $\fattau^\dagger$ being unital, i.e. having the identity $\vec{\mathds{I}}$ as a fixed point~\citep{Nielsen2001QuantumIA}. With this fact demonstrated, the same argument used in the proof of Theorem~\ref{app_thm_umps_psr} to prove convergence of the effective functional of a uMPS to the fixed point $\vec{\sigma}_*^\dagger$ can be applied here. This has the effect of replacing the original Kraus-rank 1 functional of the uBM by the identity $\vec{\mathds{I}}$ in the non-terminating limit, completing the conversion from uBM to NOOM.

If the uBM transfer operator $\fattau$ is already trace-preserving then we are done, so assume it is not. We make the generic assumption\footnote{This is similar to the assumption made in proving Theorem~\ref{app_thm_umps_psr}, and is valid everywhere outside of a measure-zero subset of uBMs. In the case of degenerate eigenvalues, the conversion from uBMs to NOOMs can still be achieved given a slight redefinition of the observation space to combine together $k$ adjacent observations.} that the two eigenvalues of $\fattau$ with greatest magnitude, $\lambda_*$ and $\lambda_2$, satisfy $|\lambda_*| > |\lambda_2|$. By replacing all operators $\phi_y$ by $\phi_y / \sqrt{\lambda_*}$, we convert the uBM into one whose transfer operator $\fattau$ has leading eigenvalue of magnitude 1, a rescaling which leaves the joint probability distributions unchanged. In this case, the quantum Perron-Frobenius theorem~\citep{evans1977} then ensures that $\lambda_* = 1$, and that $\tau^\dagger$ has a unique fixed-point operator $\vec{\sigma_*}$ which is the vectorization of a full-rank (and consequently, invertible) positive definite matrix $\sigma_*$.

A similarity transformation of $S = \sigma_*^{1/2}$ can then be applied to the uBM matrices, replacing $\phi_y$ with $\phi_y' = S \phi_y S^{-1}$. This similarity transformation ensures the new transfer operator $\fattau'$ is trace-preserving, as demonstrated by the following:

\begin{equation}
\begin{split}
\fattau'^\dagger \vec{\mathds{I}} &= \left(\sum_y \overline{\phi_y'}^\dagger \otimes \phi_y'^\dagger\right) \vec{\mathds{I}} = \left(\sum_y (\sigma_*^{-1/2})^T \overline{\phi_y}^\dagger (\sigma_*^{1/2})^T \otimes \sigma_*^{-1/2} \phi_y^\dagger \sigma_*^{1/2} \right) \vec{\mathds{I}} \\
&= \left((\sigma_*^{-1/2})^T \otimes \sigma_*^{-1/2}\right) \left(\sum_y \overline{\phi_y}^\dagger \otimes \phi_y^\dagger \right) \left((\sigma_*^{1/2})^T \otimes \sigma_*^{1/2} \right) \vec{\mathds{I}} = \left((\sigma_*^{-1/2})^T \otimes \sigma_*^{-1/2}\right) \fattau^\dagger \vec{\sigma_*} \\
&= \left((\sigma_*^{-1/2})^T \otimes \sigma_*^{-1/2}\right) \vec{\sigma_*} \\
&= \vec{\mathds{I}}
\end{split}
\end{equation}
In the equations above, we have used the facts that (a) $\sigma_*$, $\sigma_*^{1/2}$, and $\sigma_*^{-1/2}$ are Hermitian, so that complex conjugation acts as $\overline{\sigma_*^{1/2}} = (\sigma_*^{1/2})^T$, (b) $\left( Z^T \otimes X \right) \vec{Y} = \vec{W}$ with $W = XYZ$, for any matrices $X, Y, Z$, and (c) $\fattau^\dagger \vec{\sigma_*} 
= \vec{\sigma_*}$

We have demonstrated that the similarity-transformed uBM now possesses a trace-preserving transfer operator, which by the arguments above ensures it is a valid NOOM in the non-terminating sequence limit.
\end{proof}

\newpage
\begin{customthm}{4.2}
\label{app_thm_hmm_noom}
[HMM $\not \subseteq$ NOOM]
There exist finite-dimensional hidden Markov models that have no equivalent finite-dimensional norm-observable operator model.
\end{customthm}

We first introduce a lemma \citep{ito, vidyasagar2011complete, Thon2015} that will help us in our proof. It tells us that two equivalent PSRs of the same dimension are simply a similarity transform away from one another. 
\begin{lemma}[\citet{Thon2015}]
\label{lemma_equiv_ooms}
Suppose  $(\mathds{C}^n, \vec{\sigma}, \{{\bf \tau}_y\}_{y\in\mathcal{O}}\}, \vec{x}_0)$ and  $(\mathds{C}^n, \vec{\sigma}', \{{\bf \tau}'_y\}_{y\in\mathcal{O}}\}, \vec{x}'_0)$ are two equivalent PSR representations, i.e., they generate the same sequence of probabilities. Then, there exists some non-singular ${\bf S} \in \mathds{C}^{n\times n}$ such that $x'_0 = {\bf S}^{-1}x_0$, $\fattau'_y = {\bf S}^{-1}{\bf \tau}_y{\bf S}$, and $\vec{\sigma}'^T = \vec{\sigma}^T{\bf S}$.
\end{lemma}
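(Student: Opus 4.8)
The plan is to prove this via the theory of minimal linear representations, using the Hankel matrix together with uniqueness of rank factorizations. The statement is the PSR/weighted-automaton analogue of the classical fact that two minimal state-space realizations of the same input--output behavior are related by a unique change of basis, so I would follow that template, adapted to the right-to-left operator ordering used in Equation~\ref{eq_umps}.

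First I would attach to each representation its forward and backward maps. For a finite word $u = u_1\cdots u_k \in \mathcal{O}^*$ define the reachable state $\vec{x}_u = \fattau_{u_k}\cdots\fattau_{u_1}\vec{x}_0 \in \mathds{C}^n$, and for $v = v_1\cdots v_m$ define the observation covector $\vec{\sigma}_v^\dagger = \vec{\sigma}^\dagger\fattau_{v_m}\cdots\fattau_{v_1}$. Collecting these over all words gives a reachability matrix $R$ (whose columns are the $\vec{x}_u$) and an observability matrix $O$ (whose rows are the $\vec{\sigma}_v^\dagger$), and the bi-infinite Hankel matrix $H$ with entry $H_{v,u} = \vec{\sigma}_v^\dagger\vec{x}_u$ (a value of the function $f$ on the concatenated word) factors as $H = OR$. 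The same construction on the primed representation gives $H = O'R'$. The point is that $H$ depends only on the generated probabilities $f$, so the two equivalent representations produce \emph{the same} Hankel matrix $H$.

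Next I would invoke minimality. Under the standing assumption that both $n$-dimensional representations are minimal (equivalently $\operatorname{rank} H = n$, which is the sense in which ``dimension $n$'' is meant here), the matrices $R, R'$ have full row rank $n$ and $O, O'$ have full column rank $n$. Uniqueness of rank factorization of $H$ then yields a single invertible $\mathbf{S} \in \mathds{C}^{n\times n}$ with $O' = O\mathbf{S}$ and $R' = \mathbf{S}^{-1}R$: concretely one takes $\mathbf{S} = O^{+}O'$ for a left inverse $O^{+}$ and checks invertibility and that $R' = \mathbf{S}^{-1}R$ using the right inverse of $R$. Reading off the column indexed by the empty word gives $\vec{x}'_0 = \mathbf{S}^{-1}\vec{x}_0$, and the row indexed by the empty word gives $\vec{\sigma}'^\dagger = \vec{\sigma}^\dagger\mathbf{S}$. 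For the operator relation I would use the shifted Hankel matrices $H_y$ with entry $(H_y)_{v,u} = \vec{\sigma}_v^\dagger\fattau_y\vec{x}_u$, which factor as $H_y = O\,\fattau_y\,R = O'\,\fattau'_y\,R'$; substituting $O' = O\mathbf{S}$ and $R' = \mathbf{S}^{-1}R$ and cancelling the full-rank factors $O$ on the left and $R$ on the right yields $\fattau'_y = \mathbf{S}^{-1}\fattau_y\mathbf{S}$ for every $y\in\mathcal{O}$, as claimed.

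The main obstacle is precisely this minimality/full-rank step: without it the similarity transform need not exist, since a representation can carry an unreachable or unobservable subspace whose operator block is entirely unconstrained by $f$, so two equal-dimension equivalent representations could have non-similar operators. I would therefore either adopt minimality as an explicit hypothesis or, if the intended reading is that $n$ is the minimal dimension, note that equivalence forces $\operatorname{rank} H = n$ for both representations and hence makes both $R$ and $O$ full rank. Once the rank conditions are in place, the remaining steps---existence and uniqueness of $\mathbf{S}$ from the two rank factorizations of $H$, and the cancellation of full-rank factors in the shifted identity---are routine linear algebra; this is essentially the argument of \citet{Thon2015} (see also \citet{ito, vidyasagar2011complete}).
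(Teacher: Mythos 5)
Your proof is correct and is essentially the standard Hankel-matrix/rank-factorization argument underlying the cited result; the paper itself gives no proof of this lemma, deferring entirely to \citet{Thon2015} (and \citet{ito, vidyasagar2011complete}), so there is nothing different to compare against. Your observation that minimality must be assumed is well taken: as literally stated the lemma is false without it (an unreachable or unobservable block of $\fattau_y$ is unconstrained by the probabilities, so equal-dimensional equivalent representations need not be similar), but the paper's only use of the lemma, in the proof of Theorem 4.2, explicitly takes both representations to be \emph{minimal} PSRs, so the omission is in the statement rather than in the application.
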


\begin{proof}
We construct a class of HMMs for which there are no equivalent NOOMs and give a proof by contradiction. Let $\mathcal{A} = (\mathds{R}^p, \vec{\sigma}, \{\fattau_y\}_{y\in\mathcal{O}}, \vec{x}_0)$ be a minimal PSR equivalent to some hidden Markov model $\mathcal{M} = (\mathds{R}^m, {\mathbf{A}}, {\bf C}, \vec{p}_0)$ for which some future state reachable from the initial state can be written as a convex combination of some previously reached states, i.e., $\vec{x}_k = \alpha \vec{x}_i + \beta \vec{x}_j$ for some $\alpha, \beta > 0$, $\alpha + \beta = 1$ and some $i, j, k \in \mathds{N}$ with $i < j < k$ and $\vec{x}_k = \frac{\fattau_{y_k}\cdots \fattau_{y_1}\vec{x}_0}{\vec{\sigma}^T\fattau_{y_k}\cdots \fattau_{y_1}\vec{x}_0}$ for some sequence of observations $Y = y_1, \ldots, y_k$ (and similarly for $\vec{x}_j$ and $\vec{x}_i$ which truncate the observations at $y_j$ and $y_i$ respectively). 

Suppose there exists an equivalent NOOM (represented in its vectorized form) $\mathcal{M}' = (\mathds{R}^n, \vec{\mathds{I}}, \{{ \boldsymbol \phi_y}\}, \vec{\psi}_0)$. Let $\mathcal{A}' = (\mathds{R}^p, \vec{\sigma}', \{\fattau'_y\}_{y\in\mathcal{O}}, \vec{x}'_0)$ be a minimal PSR computing the same distribution as $\mathcal{M}'$.

Then, by Lemma 1, we have some similarity transform ${\bf S}$ such that $\vec{\sigma}'^T = \vec{\sigma}^T{\bf S}$, $\fattau'_y = {\bf S}^{-1}\fattau_y{\bf S}$, and $\vec{x}'_0 = {\bf S}^{-1}\vec{x}_0$. \citet{Thon2015} show that there are matrices $\Phi, \Pi$ that relate the NOOM to its minimal representation as $\mathcal{A}' = \Pi\Phi^+ \mathcal{M}'\Phi\Pi^+$ (where $+$ represents the Moore-Penrose pseudoinverse). This allows us to relate the NOOM with the minimal PSR representation of its equivalent HMM as $\vec{\mathds{I}}^T\Phi\Pi^+{\bf S}^{-1} = \vec{\sigma}^T$, $\fattau_y = {\bf S}\Pi\Phi^+{\boldsymbol \phi}_y\Phi\Pi^+{\bf S}^{-1}$, and $\vec{x}_0 = {\bf S}\Pi\Phi^+\vec{\psi}_0$.

Now, by the NOOM evolution rules, the NOOM state at timestep $k$ for the sequence $Y$ is $\vec{\psi}_k = \frac{ {\boldsymbol \phi}_{y_k}\cdots {\boldsymbol \phi}_{y_1} \vec{\psi}_0}{ \vec{\mathds{I}}^T {\boldsymbol \phi}_{y_k}\cdots {\boldsymbol \phi}_{y_1} \vec{\psi}_0}$ and the probability of any given observation at that time-step is $P(y|\vec{\psi}_k) = \vec{\mathds{I}}^T{\boldsymbol \phi}_y\vec{\psi}_k$. Further, note that the NOOM state must be a vectorized rank-1 matrix whose eigenvector has unit $\ell_2$ norm, i.e., $\vec{\psi}_k = \text{vec}(\psi_k\psi_k^T)$ with $\|\psi_k\|_2 = 1$.

But we can also write NOOM state in terms of its equivalent PSR-HMM as $$\vec{\psi}_k = \Phi\Pi^+{\bf S}^{-1}\vec{x}_k = \Phi\Pi^+{\bf S}^{-1}(\alpha \vec{x}_i + \beta \vec{x}_j) =  \Phi\Pi^+{\bf S}^{-1}\left(\alpha \left({\bf S}\Pi\Phi^+\vec{\psi}_i\right) + \beta \left({\bf S}\Pi\Phi^+\vec{\psi}_j \right) \right) = \alpha\vec{\psi}_i + \beta \vec{\psi}_j$$

If $\vec{\psi}_i$ and $\vec{\psi}_j$ are valid NOOM states, we have that $\vec{\psi}_k = \alpha\vec{\psi}_i + \beta \vec{\psi}_j = \alpha\text{vec}(\psi_i\psi_i^T) + \beta\text{vec}(\psi_j\psi_j^T) = \text{vec}(\alpha\psi_i\psi_i^T + \beta\psi_j\psi_j^T)$. However, $\alpha\vec{\psi}_i + \beta \vec{\psi}_j$  is not the vectorization of a rank-1 matrix in general. In particular, $\vec{\psi}_k$ has unit Kraus-rank only if $\psi_i$ and $\psi_j$ are linearly dependent, which is true only if $\vec{x}_i$ and $\vec{x}_j$ are linearly dependent. Thus, whenever $\vec{x}_i$ and $\vec{x}_j$ are linearly independent, $\vec{\psi}_k$ cannot have unit Kraus-rank. But as normalized HMM states, $\vec{x}_i$ and $\vec{x}_j$ are always linearly independent, unless they are exactly the same. Hence, a contradiction. Thus, for such an HMM, there is no equivalent NOOM.

\end{proof}

\paragraph{NOOMs are a restrictive model class} The proof above essentially argues that if an HMM had an equivalent NOOM, then anytime a reachable HMM state can be written as a convex combination of some other reachable states, the equivalent NOOM state should also admit a representation as a convex combination of the NOOM-equivalent reachable states, but such a representation violates the condition that NOOM states have unit Kraus-rank, and hence there cannot be an equivalent NOOM. Here, we provide an example of an HMM that can have a state be a linear (here, convex) combination of two prior linearly independent states

\begin{equation}
    \vec{x}_0 = \begin{bmatrix} 1 \\ 0 \end{bmatrix} \qquad {\bf \fattau}_1 = \begin{bmatrix} 0.25 & 0.5 \\ 0.75 & 0 \end{bmatrix} \qquad {\bf \fattau_2} = \begin{bmatrix} 0 & 0 \\ 0 & 0.5\end{bmatrix}
\end{equation}

Then, for the sequence $Y = (1, 1)$:
\begin{equation}
    \vec{x}_0 = \begin{bmatrix} 1 \\ 0 \end{bmatrix} \qquad \vec{x}_1 = \begin{bmatrix} 0.25 \\ 0.75 \end{bmatrix} \qquad \vec{x}_2 = \begin{bmatrix} 0.7 \\ 0.3 \end{bmatrix} = 0.6 \vec{x}_0 + 0.4\vec{x}_1
\end{equation}
Since $\vec{x}_0$ and $\vec{x}_1$ are linearly independent, we know that such an HMM cannot have an equivalent NOOM because of NOOM's state rank constraints. In this case, we see that if an HMM reaches a state that lies strictly inside the convex hull of other reachable states, it rules out a NOOM representation. This constitutes a fairly expansive class of HMMs, suggesting that NOOMs cannot model a wide variety of HMMs, making NOOMs a restrictive model class. This core insight holds for any quantum model that updates and maintains only pure quantum states. Since HQMMs allow states to be vectorizations of arbitrary rank matrices (and so convex combinations of other reachable states), we do not run into the same issues.

\newpage
\begin{customcorollary}{4.1}
[uBM $\nsubseteq$ HMM and HMM $\nsubseteq$ uBM]
There exist finite-dimensional uniform Born machines that have no equivalent finite-dimensional hidden Markov models, and vice-versa
\end{customcorollary}
\begin{proof}
For the first relationship uBM $\nsubseteq$ HMM, note that we already know that NOOM $\nsubseteq$ HMM. \citet{Zhao2010b} demonstrate this by designing the NOOM probability clock model, where the latent state (and therefore conditional probabilities) exhibit oscillatory behavior. The negative entries in NOOM operators allow their top eigenvalues to be complex valued, which allow for such oscillatory behavior. On the other hand, HMMs with non-negative operators with real eigenvalues cannot produce such oscillations. Geometrically, valid finite-dimensional HMM states are restricted to form polyhedral cones, while the oscillations in the probability clock require state dynamics on non-polyhedral (or infinitely generated polyhedral) cones. From Theorem~\ref{app_thm_bm_noom}, we know that a non-terminating uBM with the same operators as the probability clock NOOM will generate identical probabilities, and thus produce the same dynamics as the probability clock NOOM. These are then instances of uBMs that cannot be modeled by a finite dimensional HMM.

For the second relationship HMM $\nsubseteq$ uBM, note that Theorem~\ref{app_thm_hmm_noom} tells us that there exist HMMs that cannot be modeled by finite dimensional NOOMs. The same HMMs cannot be modeled by non-terminating finite dimensional uBMs, as these are equivalent to NOOMs.
\end{proof}

\newpage
\begin{customthm}{5.1}
\label{app_thm_lps_hqmm}
Non-terminating uniform locally purified states are equivalent to hidden quantum Markov models.
\end{customthm}
We follow the same approach as in Theorem~\ref{app_thm_bm_noom}. Uniform LPS models and HQMMs differ only in their evaluation functionals, and that HQMMs operators are trace-preserving. While uLPSs can have an arbitrary evaluation functional, HQMMs are restricted to the identity evaluation functional $\vec{\mathds{I}}$. Both models have operators of the form $\fattau_y = \sum_y \overline{K_y} \otimes K_y$, which are completely positive operators. 

As discussed in Theorem~\ref{app_thm_bm_noom}, the transfer operator $\fattau$ can be rescaled and similarity transformed into one that is trace-preserving. In the limit of non-terminating sequences, the evaluation functional of this transformed model will then converge to $\vec{\mathds{I}}$, the fixed point of $\fattau^\dagger$. Therefore, this similarity transform allows us to map a non-terminating uLPS to an HQMM, proving that non-terminating uLPSs are equivalent to HQMMs.

\newpage
\begin{customcorollary}{5.1}
[NOOM $\subset$ HQMM]
\label{app_hqmm_noom}
Finite dimensional norm-observable operator models are a strict subset of finite dimensional hidden quantum Markov models.
\end{customcorollary}
We know from Theorem~\ref{app_thm_hmm_noom} that there exist HMMs that cannot be modeled by finite-dimensional NOOMs. However, since all HMMs can be modeled by finite dimensional HQMMs, the same HMMs serve as instances of HQMMs that cannot be modeled by NOOMs. This, combined with the fact that NOOM $\subseteq$ HQMM \citep{adhikary2019expressiveness}, give us that NOOM $\subset$ HQMM.

\newpage

\section{Expresiveness of HQMMs (uLPS)} 
\label{app_hqmm_express}
With Corollary~\ref{corollary_hqmm_noom}, we see that HQMMs are a particularly expressive \emph{constructive} class of PSRs that avoid the NPP by design.
What makes HQMMs more expressive than HMMs is that they still admit infinitely generated (i.e., non-polyhedral) cones of valid states (in reference to the discussion in Section \ref{sec_npp}). Geometrically, we can think of the valid unit-Kraus rank (or equivalently pure density matrix) initial NOOM states as the extremal points of a \emph{spectraplex}, which is the intersection of the affine space of unit trace matrices with the convex cone of PSD matrices. The arbitrary Schmidt rank (or mixed density matrices) admitted as initial states for HQMMs fills the entire spectraplex \citep{adhikary2019expressiveness}. 

 The natural question to ask is whether HQMMs can model \emph{any} finite dimensional PSRs, or if we lose any expressiveness in using HQMMs or LPSs? \citet{glasser2019expressive} provide results for the non-uniform case: there exist finite-dimensional non-uniform MPS that have no equivalent finite-dimensional LPS. A similar answer for the \emph{uniform} case is as yet unknown. Investigating this from the HQMM perspective, \citet{monras2016quantum} term the question of which PSRs have equivalent HQMMs as the \emph{completely positive realization problem}. They argue that a necessary and sufficient condition for a PSR to have an equivalent HQMM is if the operators characterizing the PSR come from a semi-definite representable (SDR) cone that also defines the convex cone of valid states.\footnote{The intersection of the cone of positive semi-definite matrices with an affine subspace is called a spectrahedron, and linear maps of spectrahedra are called \emph{spectrahedral shadows} or semi-definitely representable sets. These are the feasible regions of a semidefinite program.} They suggest that if we could show that every valid cone for a PSR satisfied the SDR condition, we could show that HQMMs are equivalent to PSRs via the Helton-Nie conjecture; while it is the case that every SDR set is convex and semi-algebraic, the converse, known as the Helton-Nie conjecture was only recently shown to be false \citep{scheiderer2018spectrahedral}.

Since convex semi-algebraic sets are SDR sets in a wide variety of cases, it may still be the case that HQMMs are equivalent to PSRs. We thus pose two open questions that must be answered to obtain a full characterization of HQMMs relative to PSRs: first, whether the convex cone characterizing the PSR is semi-algebraic, and second, if it satisfies any sufficient conditions for being SDR \citep{helton2009sufficient}, and if not, how common such conditions are. Nevertheless, convex and semi-algebraic cones that are SDR are a broad class and the results thus far show that HQMMs are the most expressive known subset of PSRs. Indeeed, we do not know any examples of PSRs that do not have an equivalent finite-dimensional HQMM.

\newpage
\section{Controlled Stochastic Processes}
\label{app_sec_controlled_stoc_proc}
In this paper, we have discussed various models of stochastic processes, including counter parts in quantum tensor networks and weighted automata. These models are limited in the sense that have no notion of control -- an agent can make observations of a system but cannot perturb it. We now extend our analysis to \emph{controlled} stochastic processes; connections between these models are illustrated in Figure~\ref{fig_controlled_subsets}.

\subsection{Partially Observable Markov Decision Processes}
Partially observable Markov decision processes (POMDPs) can be viewed as natural extensions of hidden Markov models to controlled stochastic processes. Formally, we give the following definition\footnote{We have ignored the reward function commonly associated with controlled stochastic processes to draw a clearer comparison with uncontrolled processes. Our analysis can be equivalently applied with reward functions.}:
\begin{definition}[Partially Observable Markov Decision Processes] An $n$-dimensional partially observable Markov decision process for sets of discrete actions $\mathcal{A}$ and observations $\mathcal{O}$ is a tuple $(\mathds{R}^n, \{\mathbf{A}^{a}\}_{a \in \mathcal{A}}, \{\mathbf{C}^{a}\}_{a \in \mathcal{A}}, \vec{x}_0)$, where the matrices $\mathbf{A}^{a} \in \mathds{R}^{n \times n}_{\geq 0}$ and $\mathbf{C}^{a} \in \mathds{R}^{|\mathcal{O}| \times n}_{\geq 0}$ are non-negative column-stochastic transition and emission matrices corresponding to the action $a$ such that $\vec{1}^{~T} \mathbf{A}^{a} = \vec{1}^{~T} \mathbf{C}^{a} = \vec{1}^{~T}$. The initial state $\vec{x}_0 \in \mathds{R}^{n}_{\geq 0}$ is also non-negative and satisfies $|\vec{x}_0|_1 = 1$.
\end{definition}

POMDPs can also be defined via observable operators $\mathbf{T}^{a}_y = \text{diag}(\mathbf{C}^{a}_{(y,:)}) \mathbf{A}^{a}$ corresponding to taking action $a$ and observing $y$. The state update is done as $\vec{x}_t = \frac{\mathbf{T}_{y}^a~\vec{x}_{t-1}}{\vec{\mathbf{1}}^T \mathbf{T}_{y}^a~\vec{x}_{t-1}}$, and probabilities are calculated as $P(a_1y_1,\dots,a_N,yN) = \vec{\mathbf{1}~}^T \mathbf{T}^{a_k}_{y_k}\dots \mathbf{T}^{a_1}_{y_1} \vec{x}$; exactly the same as HMMs, but with controls.
\subsection{Input-Output Extensions of Stochastic Processes}
In the earlier section, we considered the \emph{uncontrolled} PSRs, which are equivalent to the observable operator models (OOMs) defined in \citet{jaeger2000observable}. Augmenting these models with a notion of control, we obtain input-output OOMs (IO-OOMs) \citep{Jaeger2003DiscretetimeDO} or controlled PSRs \citep{SinghPSR}, which essentially consist of separate OOM for every action. With the same motivation, we can also define analogous input-output HQMMs as follows
\begin{definition}[Input-Output HQMM] An $n^2$-dimensional input-output hidden quantum Markov model for sets of discrete actions $\mathcal{A}$ and observations $\mathcal{O}$ is a controlled stochastic process given by the tuple ($\mathds{C}^{n^2}, \vec{\mathds{I}}, \{\mathbf{L}_{y}^a\}_{y \in \mathcal{O}, a \in \mathcal{A}}, \vec{\rho}_0$). The set of operators $\{\mathbf{L}_y^a\}_{y \in \mathcal{O}}$ for every action $a$ form a set of CP-TP Liouville operators as in Definition~\ref{def:hqmm}. The initial state $\vec{\rho}_0$ is a vectorized unit-trace Hermitian PSD matrix of arbitrary rank.
\end{definition}
The state update and probability computations for IO-HQMMs are done exactly the same as with HQMMs, but now with operators indexed by both observations and actions at each time step: $P(a_1 y_1 \dots a_k y_k) = \vec{\mathds{I}}^T \mathbf{L}_{y_k}^{a_k} \dots \mathbf{L}_{y_1}^{a_1} \vec{\rho}_0$ and $\vec{\rho}~' = \frac{\mathbf{L}_{y_k}^{a_k} \dots \mathbf{L}_{y_1}^{a_1} \vec{\rho}_0}{\vec{\mathds{I}}^T \mathbf{L}_{y_k}^{a_k} \dots \mathbf{L}_{y_1}^{a_1} \vec{\rho}_0}$

Similarly, we can define input-output uLPSs and input-output uMPSs in exactly the same way, with non-terminating variants being equivalent to IO-HQMMs and IO-OOMs/PSRs respectively. When viewed as tensor network in Figure~\ref{fig_controlled_subsets}, we immediately see that such IO-HQMMs and uniform IO-LPSs have the same structure as matrix product operators \citep{Chan2016MatrixPO, Murg2008MatrixPO} which can be viewed as MPSs with an additional open index at each core.
\paragraph{Quantum Observable Markov Decision Processes are Input-Ouput HQMMs}
\citet{qomdp} developed quantum observable Markov decision processes (QOMDPs) as a strict generalization of classical POMDPs, by swapping the belief state vector $\vec{x}$ with the density matrix of a quantum state $\rho$. Ignoring reward functions, QOMDPs can be defined as follows
\begin{definition}
An $n$-dimensional quantum observable Markov decision process for sets of discrete actions $\mathcal{A}$ and observations $\mathcal{O}$ is a tuple $(\mathds{C}^n, \{\mathbf{K}^a_{y}\}_{a \in \mathcal{A}, y \in \mathcal{O}}, \rho_0)$, where the set of operators $\{\mathbf{K}^a_{y} \in \mathds{C}^{n \times n}\}_{y \in \mathcal{O}}$ for each action $a$ forms a quantum channel with $\sum_{y \in \mathcal{O}} \mathbf{K}_{y}^{a\dagger}\, \mathbf{K}_{y}^a = \mathds{I}$. The initial state $\rho_0$ is a unit-trace Hermitian PSD density matrix of arbitrary rank.

\end{definition}
The probability of an action-observation sequence $a_1 y_1 \cdots a_T y_T$ for a QOMDP is then taken to be $\text{Tr}\left( K_{y_k}^{a_k}\dots K_{y_1}^{a_1}~\rho_0~{K_{y_1}^{a_1}}^\dagger\dots {K_{y_k}^{a_k}}^\dagger \right)$. To draw connections with HQMMs, we represent the vectorized version of the QOMDP update equations as follows:
\begin{small}
\begin{align}
    &\vec{\rho}~' = \text{vec}\left(\frac{\mathbf{K}_{o_k}^{a_k}\dots \mathbf{K}_{o_1}^{a_1}~\rho~{\mathbf{K}_{o_1}^{a_1}}^\dagger\dots {\mathbf{K}_{o_k}^{a_k}}^\dagger}{\text{Tr}\left( \mathbf{K}_{o_k}^{a_k}\dots \mathbf{K}_{o_1}^{a_1}~\rho~{\mathbf{K}_{o_1}^{a_1}}^\dagger\dots {\mathbf{K}_{o_k}^{a_k}}^\dagger \right)}\right)
    = \frac{\left({\overline{\mathbf{K}}_{y_T}^{a_T}} \otimes {\mathbf{K}_{y_T}^{a_T}}\right) \dots \left({\overline{\mathbf{K}}_{y_1}^{a_1}} \otimes {\mathbf{K}_{y_1}^{a_1}}\right) \vec{\rho}}{\vec{\mathbb{I}}\left({\overline{\mathbf{K}}_{y_T}^{a_T}} \otimes {\mathbf{K}_{y_T}^{a_T}}\right) \dots \left({\overline{\mathbf{K}}_{y_1}^{a_1}} \otimes {\mathbf{K}_{y_1}^{a_1}}\right) \vec{\rho}}
    = \frac{\mathbf{L}_{y_T}^{a_T}\dots\mathbf{L}_{y_1}^{a_1}\vec{\rho}_0}{\vec{\mathds{I}}^T \mathbf{L}_{y_T}^{a_T}\dots\mathbf{L}_{y_1}^{a_1}\vec{\rho}_0}
    \label{eq_qomdp_updates}
\end{align}
\end{small}
Notice that this expression is identical to the update equation for IO-HQMMs. The only difference is that
the QOMDP operators are restricted to have unit Kraus-rank: $\mathbf{L}_y^a = {\overline{\mathbf{K}}_y^a} \otimes \mathbf{K}_y^a$. This makes them somewhat alike NOOMs; however, unlike NOOMs, QOMDPs allow latent states of arbitrary Schmidt-ranks -- i.e. both mixed and pure states. We thus arrive at the following theorem characterizing the expressiveness of QOMDPs
\begin{theorem}
QOMDPs $\subseteq$ IO-HQMMS $=$ non-terminating uniform IO-LPS $\subseteq$ IO-OOMs = PSRs 
\end{theorem}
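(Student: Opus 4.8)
The plan is to establish the chain one link at a time, since it consists of two definitional/structural identities at its ends, one subset inclusion at each end, and a central equivalence that is the controlled analogue of the non-terminating limit arguments already proved for the uncontrolled models. First, the rightmost equality IO-OOMs $=$ PSRs is essentially by construction: an IO-OOM is a family of observable operators indexed jointly by actions and observations, $\{\mathbf{L}_y^a\}$, together with an evaluation functional and initial state, and it computes $P(a_1 y_1 \cdots a_T y_T)$ by the same multilinear form as an uncontrolled PSR over the composite alphabet $\mathcal{A}\times\mathcal{O}$. Reading the action as part of the symbol, an IO-OOM is exactly a controlled PSR, so the classes coincide and no convergence argument is needed here.

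Next, the leftmost inclusion QOMDPs $\subseteq$ IO-HQMMs follows directly from the manipulation in Equation~\ref{eq_qomdp_updates}: a QOMDP's update and probability rules are literally those of an IO-HQMM whose Liouville operators have the special unit-Kraus-rank form $\mathbf{L}_y^a = \overline{\mathbf{K}}_y^a \otimes \mathbf{K}_y^a$. I would verify that such a QOMDP is a legitimate IO-HQMM by checking the two constraints of Definition~\ref{def:hqmm} for each action: complete positivity holds because a single Kraus operator induces a CP map, and trace preservation $\vec{\mathds{I}}^T \sum_y \mathbf{L}_y^a = \vec{\mathds{I}}^T$ is exactly the quantum-channel condition $\sum_y \mathbf{K}_y^{a\dagger}\mathbf{K}_y^a = \mathds{I}$, while the initial state is a unit-trace PSD density matrix of arbitrary rank in both models. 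Since IO-HQMMs permit operators of arbitrary Kraus rank, the inclusion is of the same restrictive nature as NOOM $\subseteq$ HQMM.

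For the central equality non-terminating uniform IO-LPS $=$ IO-HQMMs, together with the inclusion non-terminating uniform IO-LPS $\subseteq$ IO-OOMs, I would lift Theorems~\ref{app_thm_lps_hqmm} and~\ref{app_thm_umps_psr} to the controlled setting. Equation~\ref{eq_hqmm_lps} already shows every IO-HQMM is a uniform IO-LPS with left boundary $\vec{\mathds{I}}$; conversely, a uniform IO-LPS has completely positive operators $\fattau_y^a = \sum_\beta \overline{\mathbf{K}}_{\beta,y}^a \otimes \mathbf{K}_{\beta,y}^a$, hence is a special IO-OOM, and marginalizing over the infinite future replaces its evaluation functional by the fixed point of the relevant transfer operator exactly as in the uncontrolled proofs. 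Rescaling to unit spectral radius and applying the positive-definite similarity transform $S = \sigma_*^{1/2}$ as in the proof of Theorem~\ref{app_thm_bm_noom} then converts the transfer operator into a trace-preserving (CP-TP) one, so that its adjoint fixes $\vec{\mathds{I}}$ and the model becomes an IO-HQMM.

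The hard part is carrying out this limit argument uniformly across all actions at once. In the uncontrolled case there is a single transfer operator $\fattau = \sum_y \fattau_y$ with one fixed point, whereas in the controlled case each action $a$ carries its own operator $\fattau^a = \sum_y \fattau_y^a$, and a single similarity transform $S$ must simultaneously render every $\fattau^a$ trace-preserving so that $\vec{\mathds{I}}$ becomes the common effective evaluation functional. I would address this by observing that trace preservation for an IO-HQMM is an action-wise constraint $\vec{\mathds{I}}^T \sum_y \fattau_y^a = \vec{\mathds{I}}^T$ holding for every $a$, so the target fixed point $\vec{\mathds{I}}$ is shared across actions; the non-terminating limit can then be taken with respect to a fixed future action sequence, and the same $S = \sigma_*^{1/2}$ that normalizes the marginal transfer operator produces $\vec{\mathds{I}}$ as the boundary for all actions. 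Verifying that this common normalization is consistent -- that the per-action spectral-radius rescalings do not conflict -- is the step I expect to require the most care, and it is the natural controlled counterpart of the non-degeneracy assumption used in Theorems~\ref{app_thm_umps_psr} and~\ref{app_thm_bm_noom}.
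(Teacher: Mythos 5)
Your decomposition of the chain matches the paper's treatment almost exactly: the paper likewise obtains QOMDP $\subseteq$ IO-HQMM directly from the vectorization identity in Equation~\ref{eq_qomdp_updates} (the only difference being the unit-Kraus-rank restriction $\mathbf{L}_y^a = \overline{\mathbf{K}}_y^a \otimes \mathbf{K}_y^a$), treats IO-OOM $=$ controlled PSR as definitional, and asserts the central equality IO-HQMM $=$ non-terminating uniform IO-LPS simply ``in exactly the same way'' as the uncontrolled Theorems~\ref{app_thm_umps_psr} and \ref{app_thm_lps_hqmm}, with no further argument. So your first two links and your overall strategy are faithful to the paper, and in fact more explicit about checking the CP-TP conditions for the QOMDP inclusion.

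The one place you flag as requiring care --- finding a single similarity transform $S$ that renders \emph{every} per-action transfer operator $\fattau^a = \sum_y \fattau_y^a$ trace-preserving simultaneously --- is indeed the only nontrivial step, and you are right to be suspicious: the paper silently skips it. Be aware that this is a genuine constraint, not just bookkeeping. The direction IO-HQMM $\subseteq$ non-terminating uniform IO-LPS is free, because the action-wise TP condition $\vec{\mathds{I}}^T \fattau^a = \vec{\mathds{I}}^T$ already gives all actions the common left fixed point $\vec{\mathds{I}}$. The converse is where the issue lives: for an arbitrary collection $\{\fattau^a\}_a$ of CP maps there is in general no single positive definite $\sigma_*$ whose square root conjugates all of them into CP-TP form, since each $\fattau^{a\dagger}$ has its own leading eigenvector. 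The resolution is that this common-fixed-point property is forced by requiring the non-terminating IO-LPS to define a consistent controlled stochastic process in the first place: the marginal of a prefix probability over future observations must not depend on which future actions are taken, which (in the non-degenerate-spectrum setting of Theorem~\ref{app_thm_umps_psr}) is exactly the statement that the $\fattau^{a\dagger}$ share their leading eigenvector $\vec{\sigma}_*$ up to a common scale. If you state that consistency condition explicitly as part of the definition of a valid non-terminating IO-uLPS, your single $S = \sigma_*^{1/2}$ then works for all actions at once and the proof closes; without it, the central equality should be weakened to an inclusion.
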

Quantum Markov decision processes were also studied by \citet{YingYing}, who arrived at essentially the same model as QOMDPs. \citet{Cidre2016} proposed an alternate formulation, called Quantum MDPs (QuaMDPs), along with an associated point-based value iteration algorithm similar to that used to learn classical POMDPs \citep{Pineau2003PointbasedVI}. Although connections to QOMDPs were not considered in the original work, QuaMDPs are a special case of QOMDPs. The filtering process in IO-HQMMs and QOMDPs corresponds to positive operator valued measurements, which are generalizations of the more restrictive projection valued measurements used in QuaMDPs. One can reduce positive operator valued measurements to projection based measurements via the Stinesprig dilation theorem \citep{Stinespring1955PositiveFO}, but this requires the system interacting with an ancillary sub-system; this is not the case in QuaMDPs.

\paragraph{Undecidability of Perfect Planning}
In moving from POMDPs to QOMDPs, \citet{qomdp} find an apparent classical-quantum separation in the problem of perfect goal state reachability. They consider particular instances of controlled processes where certain states are labeled as goals, and are set to be absorbing -- the transition probabilities from these states to any other state is zero; we will refer to these as \emph{goal oriented} models. 

In such models, the perfect goal state reachability problem is stated as follows: given an arbitrary initial state and a goal state, is there a sequence of operators that will leave the system in a goal state with probability $1$ in a finite number of steps. In other words: is there a policy that will take the agent deterministically from some initial state to a goal state in a finte number of steps? \citet{qomdp} show that this problem is undecidable for QOMDPs, even though it is decidable for POMDPs. 
Formally, the undecidability of perfect goal state reachability in QOMDPs is a consequence of the undecidability of the quantum measurement occurence problem \citep{Eisert2012QuantumMO}. Intuitively, \citet{qomdp} point out that the decidability of the same problem for POMDPs boils down to its non-negativity constraints. However, negative parameters are also present in \emph{classical} generalizations of POMDPs, namely IO-OOMs or PSRs, as well as other quantum models including IO-HQMMs or IO-LPS. Thus, we would expect this problem should be undecidable for these models as well. Indeed, we can show this to be the case using the subset relationships we have established between these models and QOMDPs.
\begin{theorem}[Perfect Goal State Reachability]
Given an initial state and a goal state, it is undecidable whether there exists a policy that will leave a goal-oriented IO-HQMMs, (non-terminating uniform) IO-LPS, IO-OOM or PSR in the goal state in a finite number of steps.
\end{theorem}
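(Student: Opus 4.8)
The plan is to prove undecidability by reduction from the perfect goal state reachability problem for QOMDPs, which \citet{qomdp} show to be undecidable as a consequence of the undecidability of the quantum measurement occurrence problem. The crucial leverage is the chain of inclusions established above, namely that QOMDPs $\subseteq$ IO-HQMMs $=$ non-terminating uniform IO-LPS $\subseteq$ IO-OOMs $=$ PSRs. Because these inclusions are realized by an explicit embedding---a QOMDP is precisely an IO-HQMM whose Liouville operators $\mathbf{L}_y^a = \overline{\mathbf{K}}_y^a \otimes \mathbf{K}_y^a$ are restricted to unit Kraus-rank---every QOMDP instance is \emph{already} an instance of each larger model class, and this embedding is trivially computable.

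First I would verify that the goal-oriented structure transfers faithfully across the embedding. A goal-oriented QOMDP designates certain states as absorbing goals; under the identification in Equation~\ref{eq_qomdp_updates}, the vectorized state and the state-update rule of the QOMDP coincide \emph{exactly} with those of the corresponding IO-HQMM, so an absorbing goal density matrix maps to an absorbing goal state of the IO-HQMM, and the event ``the system is left in the goal state with probability $1$ after a finite sequence of actions'' is computed identically in both models. Hence a policy realizing perfect reachability in the QOMDP is a policy realizing perfect reachability in its image, and conversely, so the two reachability questions have the same answer.

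Second, I would run the contradiction. Suppose perfect goal state reachability were decidable for goal-oriented IO-HQMMs. Composing the (computable) embedding with this decision procedure would decide perfect goal state reachability for QOMDPs, contradicting \citet{qomdp}. The identical argument applies to (non-terminating uniform) IO-LPS, which equals IO-HQMM, and to IO-OOMs and PSRs, since QOMDPs are contained in each of these classes and the embedding again preserves the absorbing-goal structure and the probability-$1$ reachability predicate.

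The main obstacle is the verification in the first step: one must confirm that the notion of a ``goal state'' together with the absorbing condition are expressible in the larger, possibly negative-parameter models, and that the probability-$1$ reachability predicate is invariant under the embedding. Since the update and probability formulas are syntactically identical across the inclusions---only the positivity and normalization \emph{constraints} on the operators are relaxed when passing to IO-OOMs/PSRs, and only the Kraus-rank constraint when passing from QOMDPs to IO-HQMMs---no genuinely new dynamics are introduced, so this faithfulness is immediate and the reduction carries the undecidability straight through.
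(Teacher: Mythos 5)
Your proposal is correct and follows essentially the same route as the paper: undecidability is inherited along the inclusion chain QOMDPs $\subseteq$ IO-HQMMs $=$ non-terminating uniform IO-LPS $\subseteq$ IO-OOMs $=$ PSRs, since a decision procedure for the larger class would decide the QOMDP instance. Your extra verification that the absorbing-goal structure and the probability-$1$ reachability predicate are preserved under the (identity) embedding is a useful explicit check that the paper leaves implicit, but it does not change the argument.
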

\begin{proof}
QOMDPs are contained within the class of IO-HQMMs or IO-uLPS, which are, in turn, contained within the class of IO-OOMs or PSRs. If a problem is undecidable for a model class, it must be undecidable in general for any other class that contains it as a subset. Therefore, we immediately see that the undecidability carries over to these models as well.
\end{proof}

\begin{figure}
\begin{subfigure}{0.45\textwidth}
\includegraphics[scale=0.35]{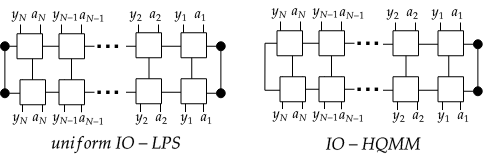}
\end{subfigure}
\begin{subfigure}{0.5\textwidth}
\includegraphics[scale=0.35]{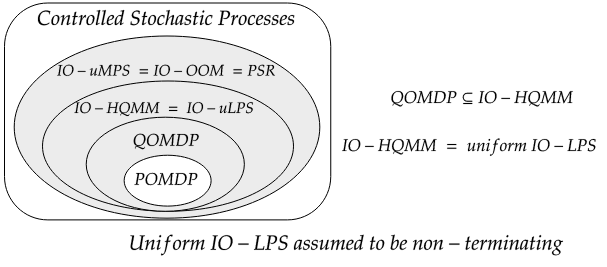}
\end{subfigure}
\caption{(Left) Tensor network diagrams (Right) Subset relationships between models, and new relationships established in this paper. The grey sections represent potentially non-strict subsets.}
\label{fig_controlled_subsets}
\end{figure}

\end{appendix}

\end{document}